\documentclass{article}

 \usepackage[preprint]{neurips_2026}

\usepackage[utf8]{inputenc} % allow utf-8 input
\usepackage[T1]{fontenc}    % use 8-bit T1 fonts
\usepackage{hyperref}       % hyperlinks
\usepackage{url}            % simple URL typesetting
\usepackage{booktabs}       % professional-quality tables
\usepackage{amsfonts}       % blackboard math symbols
\usepackage{nicefrac}       % compact symbols for 1/2, etc.
\usepackage{microtype}      % microtypography
\usepackage{xcolor}         % colors

\usepackage{array}

\usepackage{threeparttable}
\usepackage{adjustbox}
\usepackage{amsmath}
\usepackage{cleveref}

\crefname{appendix}{appendix}{appendices}
\Crefname{appendix}{Appendix}{Appendices}

\usepackage{booktabs}
\usepackage{graphicx}
\usepackage{subcaption}

\usepackage{wrapfig}
\usepackage{amsthm}
\newtheorem{proposition}{Proposition}[section]

\usepackage{booktabs}
\usepackage{multirow}
\usepackage{makecell}
\usepackage{threeparttable}
\usepackage{array}
\usepackage{graphicx}
\usepackage[table]{xcolor}
\definecolor{bestblue}{RGB}{218,232,246}
\definecolor{secondgreen}{RGB}{226,246,232}
\usepackage{arydshln}

\title{DiM\textsuperscript{3}: Bridging Multilingual and Multimodal Models via Direction- and Magnitude-Aware Merging}

\author{
Zijing Wang$^{1}$ 
Mingyang Wang$^{2,3}$ 
Ercong Nie$^{4}$ 
Yongkang Liu$^{1}$\thanks{Corresponding authors.} \hspace{0.3em}
Shi Feng$^{1}$ \\
\textbf{Mengjie Zhao$^{1}$ }
\textbf{Daling Wang$^{1}$\footnotemark[1] }
\textbf{Xiaocui Yang$^{1}$ }
\textbf{and Hinrich Schütze$^{2,3}$} \\
$^{1}$Northeastern University, China;
$^{2}$CIS, LMU Munich, Germany \\
$^{3}$Munich Center for Machine Learning (MCML), Germany \\
$^{4}$Shanghai Jiao Tong University, China \\
\texttt{wzj1718@gmail.com}
}

\begin{document}

\maketitle
% \begingroup
% \renewcommand{\thefootnote}{\ensuremath{\dagger}}
% \footnotetext{Corresponding authors.}
% \endgroup
% \setcounter{footnote}{0}

\begin{abstract}
Towards more general and human-like intelligence, large language models should seamlessly integrate both multilingual and multimodal capabilities; however, extending an existing multimodal model to many languages typically requires expensive multilingual multimodal data construction and repeated end-to-end retraining. 
We study a training-free alternative: injecting multilingual capability into an existing multimodal model by composing residual updates in the shared language model backbone. The key challenge is that multilingual and multimodal updates are heterogeneous, reflecting different functional roles in the shared model. To address this, we propose \textbf{Di}rection- and \textbf{M}agnitude-aware \textbf{M}ultilingual \textbf{M}ultimodal merging (\textbf{DiM\textsuperscript{3}}), which selectively composes the two updates at each parameter dimension while preserving the original vision encoder and multimodal projector. Experiments on multilingual benchmarks in both text-only and vision-language settings, covering 57 languages across LLaVA- and Qwen-based backbones, show that DiM\textsuperscript{3} consistently outperforms existing merging baselines, substantially improves multilingual performance over the original multimodal model, and remains competitive with dedicated multilingual multimodal fine-tuning while largely retaining general multimodal ability. We further show that DiM\textsuperscript{3} can be directly applied to already trained multilingual multimodal models and still yield additional gains. Further interpretability analysis shows that DiM\textsuperscript{3} primarily reshapes intermediate-layer semantic representations, strengthening cross-lingual alignment under both text-only and multimodal inputs while preserving higher-layer task-sensitive structure. Our repository is on~\url{https://github.com/wzj1718/DiM3}.

\end{abstract}

\section{Introduction}

Multimodal large language models (MLLMs)~\cite{singh2025openai,team2023gemini,wang2025internvl3,yang2025qwen3} have demonstrated remarkable capabilities across a wide range of multimodal tasks, particularly in high-resource languages (HRLs).
However, their performance degrades substantially in low-resource languages (LRLs)~\cite{pfeiffer2022xgqa,changpinyo2023maxm,schneider2024m5,mogrovejo2024cvqa},
where linguistic resources and multimodal training data are scarce.
A key reason is that existing MLLMs remain highly English-centric, with pretraining dominated by English data and only limited multilingual exposure~\cite{yue2024pangea,sunparrot,geigle2024mblip}.
Such limited multilingual exposure is insufficient to support reliable reasoning and knowledge generalization in low-resource settings, leading to persistent performance gaps.

To improve MLLMs performance in LRLs, prior work typically relies on supervised fine-tuning (SFT) or continual pre-training (CPT)~\cite{tao-etal-2024-unlocking}. SFT adapts models to downstream tasks using large amounts of labeled data, while CPT leverages large-scale target-language corpora to enhance language modeling and fluency~\cite{nguyen-etal-2024-culturax}. Despite their effectiveness, both approaches require substantial amounts of data, either labeled or unlabeled, and extensive computational resources, making them costly and difficult to scale. These limitations motivate more efficient alternatives that can enhance multilingual capability without data-intensive retraining.

Model merging~\cite{wortsman2022model,zhu2024model,liu2025sens,lu2024twin} has emerged as a promising direction, enabling the integration of complementary capabilities from separately trained models without costly large-scale training. Existing methods typically represent the parameter differences between a fine-tuned model and a shared base model as task vectors~\cite{ilharcoediting} and combine them through importance-weighted aggregation~\cite{yu2024language,yadav2023ties,luo2025sizedoesfitall}. While effective for relatively homogeneous model merging settings, these approaches struggle when task vectors are heterogeneous, differing in modality, geometric structure, and training origin, as well as functional roles. In particular, multilingual updates primarily enhance cross-lingual semantic alignment~\cite{hammerl2024understanding,xue2021mt5,gao-etal-2024-multilingual}, whereas multimodal updates adapt the backbone for visual grounding~\cite{li2023blip,liu2023visual}. Naively combining such mismatched updates can therefore lead to interference and degraded performance.

To address this limitation, we propose \textbf{DiM\textsuperscript{3}}, a direction- and magnitude-aware merging strategy tailored to heterogeneous multilingual multimodal composition. Our key insight is that task vectors encode two complementary signals~\cite{salimans2016weight,liu2024dora}: \textbf{magnitude}, reflecting the strength of parameter updates, and \textbf{direction}, capturing their functional reorientation relative to the base model.
This decomposition is crucial in heterogeneous settings, where multilingual and multimodal updates may differ not only in update scale, but also in their functional directions. 
Instead of assuming global compatibility, \textbf{DiM\textsuperscript{3}} performs \emph{selective composition} rather than uniform aggregation, adaptively assigning their contributions based on local geometric importance in parameter space. By jointly leveraging magnitude and direction, the method strengthens cross-lingual semantic alignment where multilingual signals are salient, while preserving multimodal grounding where multimodal updates dominate, enabling training-free multilingual transfer without compromising visual understanding.

We evaluate \textbf{DiM\textsuperscript{3}} on multilingual text-only and vision-language benchmarks covering 57 languages and multiple backbone families. Our method consistently outperforms existing merging baselines and substantially improves over the original multimodal model, while remaining competitive with dedicated multilingual multimodal fine-tuning despite requiring no additional training. Furthermore, DiM\textsuperscript{3} can be applied to already trained multilingual multimodal models to yield additional gains, while maintains strong performance on general multimodal benchmarks. Analysis of internal representations shows that these improvements arise from reshaping intermediate-layer semantics, leading to stronger cross-lingual alignment under both text-only and multimodal inputs.

Our contributions are threefold: \textbf{(1)} We formulate multilingual multimodal transfer as a training-free residual composition problem on a shared backbone, and demonstrate that multilingual and multimodal task vectors exhibit fundamental heterogeneity; to the best of our knowledge, this setting has not been previously explored. \textbf{(2)} We propose a direction- and magnitude-aware merging rule that selectively composes multilingual and multimodal updates at the parameter level, going beyond naive addition to better capture their distinct functional roles. \textbf{(3)} We demonstrate that this approach substantially improves multilingual multimodal performance while largely preserving general multimodal ability, can be directly applied to already trained multilingual multimodal models for further gains, and achieves these improvements by strengthening cross-lingual alignment in intermediate-layer representations.

\section{Related Work}

\paragraph{Multilingual Representation in LLMs.}

Recent work suggests that multilingual capability in LLMs is supported not only by token coverage, but also by cross-lingual alignment in internal representations~\cite{chang2022geometry,lopo2025language,wendler2024llamas,chen2026understanding}. Such alignment has been linked to stronger cross-lingual transfer and multilingual generalization~\cite{huang2025neurons,bu2026language}. Layer-wise analyses further indicate that middle layers tend to encode more transferable, language-agnostic information, whereas upper layers are more closely tied to language-specific generation~\cite{bu2025alignx,zhao2024large}. These results imply that multilingual fine-tuning produces non-uniform, functionally structured residual changes in the shared backbone.

\paragraph{Multimodal LLMs and Multilingual Multimodal Models.}
MLLMs are typically built from a vision encoder, a multimodal projector, and a pretrained language model, with much of multimodal reasoning mediated by the shared language backbone~\cite{alayrac2022flamingo,wang2026plam,zheng2025unveiling}. Existing efforts to improve multilingual capability in MLLMs therefore mainly rely on additional training, including multilingual instruction tuning, large-scale multilingual multimodal data construction, and language-aware visual instruction learning~\citep{rasheed2025palo,yue2024pangea,sunparrot,geigle2025centurio}. Recent studies also show that multimodal training can degrade multilingual fidelity, for example by inducing English-biased outputs under visual inputs, and address this issue through multilingual regularization or layer-specific enhancement~\citep{pikabea2025breaking,fan2025language}. While effective, these approaches remain training- and data-intensive, motivating our study of whether training-free parameter composition can inject multilingual capability into an existing multimodal model.

\paragraph{Model Merging.}
Model merging aims to compose capabilities from independently adapted checkpoints directly in parameter space~\cite{yang2026model,wang2025more,wang2025scaling,he2024localize,chengwhoever,du2025adamms}. Early work shows that fine-tuning residuals can be treated as task vectors and combined through simple arithmetic~\cite{ilharcoediting}. Subsequent methods improve this paradigm by reducing redundancy and mitigating interference, for example through sparsification and rescaling~\cite{yu2024language}, sign-aware conflict resolution~\cite{yadav2023ties}, and sparse update selection~\cite{davari2024model,marczak2024magmax}. More recent work explores finer-grained compatibility modeling: PCB-Merging balances parameter importance and inter-task competition during composition~\cite{du2024parameter}, STF formulates merging through the superposition of task-specific features~\cite{qiu2025superpose} in representation space, and NeuroMerging further studies interference disentanglement at the neuron level~\cite{fang2025see}. 
However, most existing methods are designed for general multi-task composition and do not explicitly address the heterogeneous updates considered here. In our setting, multilingual adaptation and multimodal adaptation play different functional roles in a shared backbone, making uniform residual addition insufficient. Our method therefore focuses on a direction-magnitude aware composition rule tailored to multilingual multimodal transfer.

\section{Methodology}\label{sub:method}

\subsection{Problem Setup}

In our setting, we consider three models that share the same language model backbone: a base model $N$, a multilingual model $M$, and a multimodal model $V$. The multilingual model extends $N$ through continual pre-training on large-scale multilingual data to improve cross-lingual understanding and generation, while the multimodal model augments the same backbone with a vision encoder and a multimodal projector for grounded reasoning, i.e., $V = V^{\mathrm{lm}} \otimes V^{\mathrm{vis}} \otimes V^{\mathrm{proj}}$. As the backbone is the only component shared across all three models, transfer is performed in this parameter space.

Let $\theta_N$ and $\theta_M$ denote the parameters of the base model $N$ and the multilingual model $M$, and let $\theta_{V^{\mathrm{lm}}}$ denote the language model parameters inside the multimodal model $V$. We define two source residuals relative to the same base point:
\begin{equation}
\Delta_{\mathrm{ml}} = \theta_M - \theta_N,\qquad
\Delta_{\mathrm{mm}} = \theta_{V^{\mathrm{lm}}} - \theta_N.
\end{equation}
Here $\Delta_{\mathrm{ml}}$ captures multilingual adaptation and $\Delta_{\mathrm{mm}}$ captures the language-side adaptation induced by multimodal training.

Our goal is to inject multilingual capability into $V$ while preserving the visual grounding and multimodal coordination already established by its visual modules. We therefore keep the vision encoder and multimodal projector fixed, and only replace the shared language model backbone with a merged backbone $\widetilde{\theta}_{V^\mathrm{lm}}$. Under the standard residual view of fine-tuning, this reduces the problem to composing two source residuals around a common reference point, rather than retraining a new multilingual multimodal model from scratch.

\subsection{Residual Heterogeneity and Geometric Motivation}

A natural baseline is direct task arithmetic,
\begin{equation}
\theta_{\mathrm{merge}} \approx \theta_N + \Delta_{\mathrm{ml}} + \Delta_{\mathrm{mm}},
\end{equation}
but this approximation is reliable only when the two residuals are sufficiently compatible~\cite{wei2025optmerge}. In our setting, multilingual and multimodal fine-tuning are driven by different supervision signals: the former primarily improves cross-lingual semantic alignment and generation, whereas the latter reshapes the same backbone to cooperate with projected visual features. This motivates a \emph{heterogeneity hypothesis}: multilingual and multimodal residuals should not be treated as interchangeable parameter offsets, because they support different functional roles in the shared backbone.

\begin{figure}[t]
  \centering
  \includegraphics[width=\linewidth]{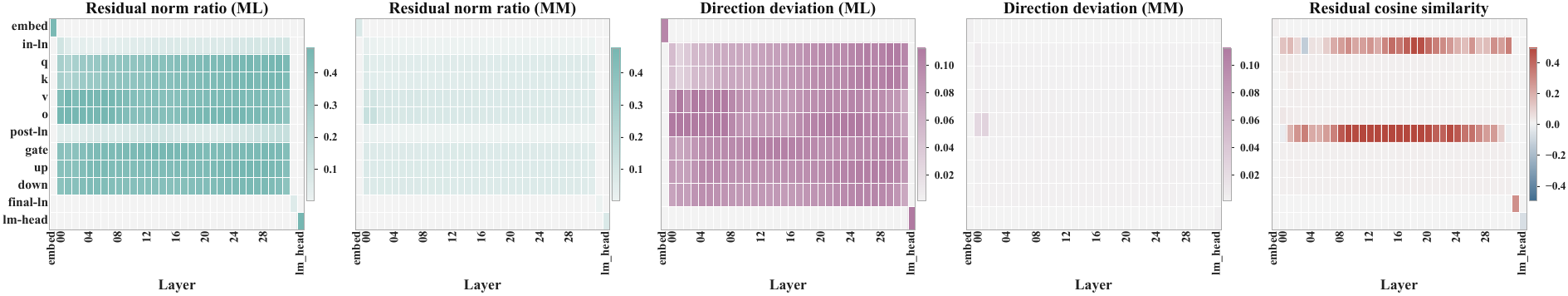}
  \captionsetup{skip=2pt} 
  \caption{\textbf{Residual heterogeneity in the shared language model backbone.} The panels show residual norm, base-relative reorientation, and cross-residual alignment for $\Delta_{\mathrm{ml}}$ and $\Delta_{\mathrm{mm}}$ across layers and modules. Together, these diagnostics reveal that multilingual and multimodal adaptations differ in both update magnitude and geometry, motivating selective rather than uniform composition.}
  \label{fig:heatmaps}
  % \vspace{-1.8em}
\end{figure}

To characterize the heterogeneity between the multilingual and multimodal updates, we study three complementary diagnostics: (1) \textit{residual norm}, which measures update magnitude; (2) \textit{base-relative reorientation}, which captures directional change relative to the base model; and (3) \textit{cross-residual alignment}, which measures how well the multilingual and multimodal updates align with each other.

For each layer weight matrix $W \in \mathbb{R}^{d_{\mathrm{out}} \times d_{\mathrm{in}}}$, we adopt a DoRA-style column-wise decomposition~\cite{liu2024dora},
representing each column as a magnitude-direction pair:
\begin{equation}
m(W)_j = \|W_{:,j}\|_2,\qquad
D(W)_{:,j} = \frac{W_{:,j}}{m(W)_j + \epsilon},
\end{equation}
where $j$ indexes the input column and $\epsilon$ is a small numerical stabilizer. Relative to the base model $N$, we define magnitude deviation and directional deviation for each source $k \in \{\mathrm{ml},\mathrm{mm}\}$ as
\begin{equation}
\delta^{\mathrm{mag}}_{k,j} = \left|m_k(j)-m_N(j)\right|,
\qquad
\delta^{\mathrm{dir}}_{k,j} = 1 - \cos\!\big(D_k(:,j),D_N(:,j)\big).
\end{equation}
Additionally, we quantify cross-residual alignment through the cosine similarity between the two source residuals at the same column.

\begin{proposition}[Residual geometry]
Ignoring $\epsilon$ for clarity, the squared residual norm of column $j$ decomposes as
\begin{equation}
\|W_{k,:,j}-W_{N,:,j}\|_2^2
=
\big(\delta^{\mathrm{mag}}_{k,j}\big)^2
+2\,m_k(j)m_N(j)\,\delta^{\mathrm{dir}}_{k,j}.
\end{equation}
\end{proposition}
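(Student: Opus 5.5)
Set $\epsilon=0$ and write each column in polar form. Put $a = W_{k,:,j}$ and $b = W_{N,:,j}$, and assume both are nonzero so that the directions $D_k(:,j)=a/\|a\|_2$ and $D_N(:,j)=b/\|b\|_2$ are well defined. Then $m_k(j)=\|a\|_2$, $m_N(j)=\|b\|_2$, and the cosine in $\delta^{\mathrm{dir}}_{k,j}$ is
\begin{equation*}
\cos\!\big(D_k(:,j),D_N(:,j)\big)=\frac{\langle a,b\rangle}{\|a\|_2\|b\|_2},
\end{equation*}
because both directions already have unit norm. The proof is a single polarization-type computation; no earlier result beyond the definitions is needed.

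First, expand the squared residual norm by bilinearity:
\begin{equation*}
\|a-b\|_2^2=\|a\|_2^2+\|b\|_2^2-2\langle a,b\rangle .
\end{equation*}
Next, substitute the polar quantities, using $\langle a,b\rangle = m_k(j)\,m_N(j)\cos\!\big(D_k(:,j),D_N(:,j)\big)$:
\begin{equation*}
\|a-b\|_2^2=m_k(j)^2+m_N(j)^2-2\,m_k(j)m_N(j)\cos\!\big(D_k(:,j),D_N(:,j)\big).
\end{equation*}
Then add and subtract $2\,m_k(j)m_N(j)$. This groups the terms as $\big(m_k(j)-m_N(j)\big)^2+2\,m_k(j)m_N(j)\big(1-\cos(\cdot,\cdot)\big)$. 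Finally, note that $\big(m_k(j)-m_N(j)\big)^2=\big|m_k(j)-m_N(j)\big|^2=(\delta^{\mathrm{mag}}_{k,j})^2$, and that the second factor is exactly $\delta^{\mathrm{dir}}_{k,j}$. This gives the claimed identity.

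There is no real obstacle; the only delicate point is the degenerate case. If $m_k(j)=0$ or $m_N(j)=0$, the direction is undefined without $\epsilon$. In that case the term $2\,m_k(j)m_N(j)\,\delta^{\mathrm{dir}}_{k,j}$ vanishes for any bounded convention on $\delta^{\mathrm{dir}}$, which lies in $[0,2]$. The identity then reduces to $\|a-b\|_2^2=(\|a\|_2-\|b\|_2)^2$, which holds trivially when one of the vectors is zero. I would state this as a brief remark.

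I would also remark that both summands are nonnegative, since $\delta^{\mathrm{dir}}_{k,j}\in[0,2]$. This exhibits the residual norm as an exact sum of a pure magnitude part and a magnitude-weighted directional part, which is the interpretive point the proposition serves.
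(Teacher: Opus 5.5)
Your proof is correct and matches the paper's argument: you expand $\|W_{k,:,j}-W_{N,:,j}\|_2^2$ in polar form as $m_k(j)^2+m_N(j)^2-2\,m_k(j)m_N(j)\cos\big(D_k(:,j),D_N(:,j)\big)$, then add and subtract $2\,m_k(j)m_N(j)$ to obtain $(\delta^{\mathrm{mag}}_{k,j})^2+2\,m_k(j)m_N(j)\,\delta^{\mathrm{dir}}_{k,j}$. Your remark on the degenerate case $m_k(j)=0$ or $m_N(j)=0$ is a small, correct addition that the paper omits.
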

\begin{proof}
Since $W_{k,:,j}=m_k(j)D_k(:,j)$, $W_{N,:,j}=m_N(j)D_N(:,j)$, and $\|D_k(:,j)\|_2=\|D_N(:,j)\|_2=1$,
\begin{equation}
\|W_{k,:,j}-W_{N,:,j}\|_2^2
=
m_k(j)^2+m_N(j)^2-2m_k(j)m_N(j)\cos\!\big(D_k(:,j),D_N(:,j)\big).
\end{equation}
Rearranging gives
\begin{equation}
  \big(m_k(j)-m_N(j)\big)^2
  +2m_k(j)m_N(j)\big(1-\cos\!\big(D_k(:,j),D_N(:,j)\big)\big)
  \end{equation}
  \begin{equation}
  =
  \big(\delta^{\mathrm{mag}}_{k,j}\big)^2
  +2\,m_k(j)m_N(j)\,\delta^{\mathrm{dir}}_{k,j},
\end{equation}
which is exactly the claimed decomposition.
\end{proof}
This decomposition shows that base-relative residual departure contains both a radial term and an angular term. In our setting, it provides a geometric basis for treating magnitude deviation and direction deviation as complementary signals: differences in norm structure favor magnitude-aware selection, while differences in reorientation favor direction-aware selection.
Figure~\ref{fig:heatmaps} confirms that $\Delta_{\mathrm{ml}}$ and $\Delta_{\mathrm{mm}}$ differ across layers and modules not only in residual norm, but also in base-relative reorientation and cross-residual alignment. This pattern indicates that the compatibility between multilingual and multimodal updates is local rather than global: some parts of backbone should remain closer to multimodal structure, whereas others can absorb stronger multilingual transfer. We therefore seek a selective merge strategy that uses both magnitude and direction signals.

\subsection{Direction- and Magnitude-Aware  Multilingual Multimodal Merging}

Following the analysis above, we merge the two source residuals in a column-wise manner within the shared language model backbone. We use column-wise aggregation because it is a natural unit for both the geometry and the function of the linear map. The magnitude-direction decomposition is defined on column vectors, and each column $W_{:,j}$ determines how input dimension $j$ contributes to all output dimensions. For each source $k\in\{\mathrm{mm},\mathrm{ml}\}$ and each column $j$, we use both $\delta^{\mathrm{mag}}_{k,j}$ and $\delta^{\mathrm{dir}}_{k,j}$ to estimate source salience. The former measures how strongly a source rewrites a column, while the latter measures how differently that column is reoriented relative to the base model.

For either branch $* \in \{\mathrm{mag},\mathrm{dir}\}$, let $\delta^*_{k,j}$ denote the corresponding deviation value. We first normalize deviations within each matrix by their relative ranks across columns:
\begin{equation}\label{eq:rank}
r^*_{k,j} = \frac{\operatorname{rank}(\delta^*_{k,j})}{d_{\mathrm{in}}}.
\end{equation}
We then convert these normalized ranks into cross-source salience scores:
\begin{equation}
s^*_{k,j} = \frac{\exp(r^*_{k,j})}{\sum_{k'} \exp(r^*_{k',j})}.
\end{equation}
This yields a relative importance score based on within-source column salience rather than absolute deviation scale. As suggested by Figure~\ref{fig:heatmaps}, the multilingual and multimodal residuals exhibit substantial cross-source heterogeneity across layers and modules, both in residual norm and direction deviation. In this setting, raw deviations are not directly comparable across the two sources: a large raw value may reflect a broader deviation range for one source in a given matrix, rather than stronger relative salience at the current column. We therefore compare the two sources through within-matrix ranks, which preserve relative ordering inside each source while reducing sensitivity to source- and module-specific scale variation. Rank normalization also makes the comparison less sensitive to a small number of extreme entries. We validate this design through salience-estimator ablations in Section~\ref{sec:ablation}.

In the two-source case, the softmax can be written as
\begin{equation}
s^*_{\mathrm{ml},j}
=
\sigma\!\left(r^*_{\mathrm{ml},j}-r^*_{\mathrm{mm},j}\right),\qquad
s^*_{\mathrm{mm},j}=1-s^*_{\mathrm{ml},j},
\end{equation}
where $\sigma(x)=1/(1+e^{-x})$. Hence the gate is driven by the salience gap between the multilingual and multimodal residuals, rather than by absolute update scale.

We aggregate the two salience branches by simple averaging:
\begin{equation}\label{eq:avg}
\omega_{k,j}=\frac{1}{2}\left(s^{\mathrm{mag}}_{k,j}+s^{\mathrm{dir}}_{k,j}\right).
\end{equation}
This symmetric design treats magnitude- and direction-based evidence equally by default, and is later validated in the ablation study. It can also be interpreted as the closed-form solution of the following consensus objective:
% =====
\begin{equation}
\omega_j
=
\arg\min_{\omega\in\Delta^2}\ 
\|\omega-s^{\mathrm{mag}}_j\|_2^2+\|\omega-s^{\mathrm{dir}}_j\|_2^2,
\end{equation}
where $\Delta^2=\{\omega\in\mathbb{R}^2:\omega_1+\omega_2=1,\ \omega_i\ge 0\}$. Thus a column receives more multilingual mass when multilingual salience is high in either branch, and remains closer to the multimodal update when multimodal salience dominates. The role of this symmetric aggregation is examined in Section~\ref{sec:ablation}.

The merged parameter is then computed column-wise as
\begin{equation}
\widetilde{W}_{:,j}
=
W_{N,:,j}
+
\sum_{k\in\{\mathrm{mm},\mathrm{ml}\}}
\omega_{k,j}\,\Delta_k^{(W)}{}_{:,j},
\end{equation}

For one-dimensional parameters such as normalization weights or bias terms, the direction-magnitude decomposition is not available. We therefore use absolute base-relative deviation,
\begin{equation}
\delta^{1\mathrm{d}}_{k,i}=|w_{k,i}-w_{N,i}|,
\end{equation}
derive element-wise salience scores in the same rank-based manner, and merge by
\begin{equation}
\widetilde{w}_i
=
w_{N,i}
+
\sum_{k\in\{\mathrm{mm},\mathrm{ml}\}}
\gamma_{k,i}\,\Delta_k^{(w)}{}_i.
\end{equation}
After merging all shared backbone parameters, we write $\widetilde{\theta}_{V^\mathrm{lm}}$ back into the multimodal anchor while leaving the vision encoder and projector unchanged. The final merged multimodal model is
\begin{equation}
\widetilde{V}
=
\big(
\theta_{V^{\mathrm{vis}}},
\theta_{V^{\mathrm{proj}}},
\widetilde{\theta}_{V^\mathrm{lm}}
\big).
\end{equation}

\begin{table*}[!h]
\vspace{-1em}
\centering
\captionsetup{skip=2pt}
\caption{Role assignment of models in each shared-backbone setting.}
\label{tab:backbone_settings}
\small
\renewcommand{\arraystretch}{1.15}
\setlength{\tabcolsep}{6pt}
\begin{adjustbox}{max width=\textwidth}
\begin{tabular}{l l l l l}
\toprule
\textbf{Family} & \textbf{Backbone} & \textbf{MM anchor} & \textbf{Multilingual expert} & \textbf{Extra references} \\
\midrule
LLaMA  & LLaMA-2-7B~\cite{touvron2023llama}  & LLaVA-v1.5-7B~\cite{liu2023visual}   & Emma-500~\cite{ji2024emma}  & PLAST-7B~\cite{fan2025language}; Palo-7B/13B~\cite{rasheed2025palo} \\
Qwen2  & Qwen2-7B-Instruct~\cite{qwen2}  & Pangea-7B~\cite{yue2024pangea}    & Qwen2-7B-Multilingual-RP  & Pangea-7B \\
Qwen3  & Qwen3-8B~\cite{yang2025qwen3}   & Qwen3-VL-8B-Instruct~\cite{bai2025qwen3} & AfriqueQwen-8B~\cite{yu2026afriquellm}  & Qwen3-VL-8B-Instruct \\
\bottomrule
\end{tabular}
\end{adjustbox}
\vspace{-2.0em}
\end{table*}

\section{Experimental Setup}

\paragraph{Baselines}
We conduct experiments under three shared-backbone settings, summarized in Table~\ref{tab:backbone_settings}. In all cases, merging is performed only on the shared language model parameters, while the visual components of the multimodal model are kept fixed.

For \textbf{merging method baselines}, we compare DiM\textsuperscript{3} with representative training-tree merging approaches, including Task Arithmetic~\cite{ilharcoediting}, DARE~\cite{yu2024language}, TIES-Merging~\cite{yadav2023ties}, Breadcrumbs~\cite{davari2024model}, PCB-Merging~\cite{du2024parameter}, STF~\cite{qiu2025superpose}, and NeuroMerging~\cite{fang2025see}. These methods span several major directions in model merging, including direct task vector composition, sparsity-based selection, conflict resolution, and more structured compatibility modeling. This comparison allows us to test whether DiM\textsuperscript{3} provides advantages over existing merging strategies in multilingual and multimodal settings. Details on these baselines are provided in Appendix~\ref{app:exp}.

For \textbf{model baselines}, we report the performance of the backbone-compatible source models and multilingual multimodal reference systems used in each setting. In the LLaMA-based setup, these include the base model LLaMA-2-7B, the multilingual model Emma-500, the multimodal anchor LLaVA-v1.5-7B, PLAST-7B~\cite{fan2025language}, and Palo-7B/13B~\cite{rasheed2025palo}, where PLAST-7B is a parameter-efficient multilingual enhancement of LLaVA-v1.5-7B and Palo is a multilingual multimodal model fine-tuned from the same backbone. In the Qwen3-based setting, we further report Qwen3-VL-8B-Instruct~\cite{bai2025qwen3} and AfriqueQwen-8B~\cite{yu2026afriquellm} as the multimodal and multilingual source models, respectively. To examine whether DiM\textsuperscript{3} can also be directly applied to an existing multilingual multimodal model, we additionally conduct merging experiments based on Pangea-7B~\cite{yue2024pangea}, using it as a pretrained multilingual multimodal checkpoint in the Qwen2 family.

\paragraph{Benchmarks}
We evaluate DiM\textsuperscript{3} on three categories of benchmarks.

For \textbf{multilingual text-only tasks}, we consider XCOPA~\cite{ponti2020xcopa}, XStoryCloze~\cite{lin2022few}, and XNLI~\cite{conneau2018xnli}. On these benchmarks, we evaluate the shared language model backbones of LLaMA-2-7B, Emma-500, LLaVA-v1.5-7B, and the models produced by different merging methods. This setting isolates multilingual semantic transfer in pure text scenarios and tests whether merging preserves the multilingual gains of the source model. Detailed benchmark settings are provided in Appendix~\ref{app:exp}.
We also list all languages used in our experiments and their corresponding ISO 639-1/2 codes in Appendix Table~\ref{tab:language_codes}.

For \textbf{multilingual multimodal tasks}, we use xMMMU~\cite{yue2024pangea}, MaXM~\cite{changpinyo2023maxm}, MaRVL~\cite{liu2021visually}, xGQA~\cite{pfeiffer2022xgqa}, CVQA~\cite{mogrovejo2024cvqa}, and Afri-MCQA~\cite{tonja2026afri}. Together, these benchmarks are used to evaluate multilingual visual understanding and cross-lingual multimodal reasoning. In the LLaMA-based setting, we report results on all six benchmarks. We further include Afri-MCQA in the Qwen2- and Qwen3-based shared-backbone settings to examine DiM\textsuperscript{3} under additional backbone families.

For \textbf{general multimodal tasks}, we report results on MMStar~\cite{chen2024we}, MMMU~\cite{yue2024mmmu}, and SEED-Bench-2-Plus~\cite{li2024seed}. We use these benchmarks in the main LLaMA-based setting to assess whether multilingual transfer degrades the general multimodal competence of the anchor model.

All evaluation results are obtained using the \textbf{lm-evaluation-harness}~\cite{biderman2024lessons} and \textbf{lmms-eval}~\cite{zhang2025lmms} libraries under the same settings to ensure fair comparison.
For the merging baselines, hyperparameters are chosen by grid search on validation sets, and we report the best configuration for each method.

\section{Main Results}\label{sec:results}

\subsection{Multilingual Capability Transfer in Text-Only Benchmarks}

\paragraph{Comparison with Existing Merging Methods on Multilingual Text Benchmarks.}

We first evaluate whether DiM\textsuperscript{3} can transfer multilingual capability to the shared language backbone under text-only inputs. Following Section~\ref{sub:method}, we compose the multilingual residual from Emma-500 with the language-backbone residual of LLaVA-v1.5-7B around the common base model LLaMA-2, while keeping the visual modules fixed. For text-only evaluation, we extract the language model component from each merged model and evaluate it independently.
\begin{wraptable}{r}{0.40\columnwidth}
\vspace{-0.6em}
\centering
\scriptsize
\setlength{\tabcolsep}{4.8pt}
\renewcommand{\arraystretch}{0.78}
\captionsetup{skip=2pt}
\caption{Results on multilingual text-only tasks. 
\protect\colorbox{bestblue}{Best} and 
\protect\colorbox{secondgreen}{second-best} results within each dataset are highlighted.}
\label{tab:text_only_results}
\resizebox{0.40\columnwidth}{!}{
\begin{tabular}{lccc}
\toprule
\textbf{Models} & \textbf{XCOPA} & \textbf{XStoryCloze} & \textbf{XNLI} \\
\specialrule{0.08em}{0.2em}{0.1em}
\multicolumn{4}{c}{\textit{language models}} \\[-2pt]
\midrule
Llama\_2\_7B    & 56.27 & 55.74 & 39.07 \\
EMMA-500        & \cellcolor{secondgreen}62.47 & \cellcolor{secondgreen}64.77 & \cellcolor{bestblue}\textbf{43.64} \\
LLaVA-v1.5-7B   & 51.98 & 49.48 & 35.55 \\
\specialrule{0.08em}{0.2em}{0.1em}
\multicolumn{4}{c}{\textit{merging methods}} \\[-2pt]
\midrule
Task Arithmetic & 56.78 & 58.41 & 40.07 \\
DARE            & 56.55 & 56.81 & 39.14 \\
TIES-Merging    & 57.44 & 58.57 & 40.47 \\
Breadcrumbs     & 56.98 & 57.49 & 40.42 \\
PCB-Merging     & 55.55 & 54.18 & 37.14 \\
STF             & 55.40 & 56.25 & 39.14 \\
NeuroMerging    & 56.09 & 56.54 & 40.01 \\
DiM\textsuperscript{3}            & \cellcolor{bestblue}\textbf{63.38} & \cellcolor{bestblue}\textbf{64.82} & \cellcolor{secondgreen}42.93 \\
\bottomrule
\end{tabular}
}
\vspace{-1.8em}
\end{wraptable}
Table~\ref{tab:text_only_results} summarizes the average multilingual score $\mathrm{Avg}_{\mathrm{mul}}$ on XCOPA, XStoryCloze, and XNLI, while Appendix Tables~\ref{tab:xcopa_xstorycloze_methods}, \ref{tab:xnli_methods}, \ref{tab:xcopa_xstorycloze_models}, and \ref{tab:xnli_models} report the full language-wise results. Our method achieves the best average multilingual score on all three benchmarks, consistently outperforming existing merging baselines. This shows that direction-magnitude-aware merging is better suited to composing multilingual and multimodal residuals than prior parameter-merging rules.

\paragraph{Comparison with Base Models on Multilingual Text Benchmarks.}
The same pattern holds when compared with backbone-compatible source models. Relative to LLaVA-v1.5-7B, DiM\textsuperscript{3} yields large gains on all three text benchmarks, confirming that multilingual capability is effectively transferred into the shared language backbone. At the same time, it remains close to Emma-500 overall, and even surpasses it in $\mathrm{Avg}_{\mathrm{mul}}$ on XCOPA and XStoryCloze. Overall, these results show that our method can effectively inherit multilingual capability from the multilingual source model while maintaining compatibility with the original LLaVA backbone.

\begin{figure*}[!htbp]
\centering
\begin{minipage}[t]{0.58\textwidth}
\vspace{0pt}
\centering
\scriptsize
\setlength{\tabcolsep}{2.8pt}
\renewcommand{\arraystretch}{0.88}
\captionsetup{skip=2pt}
\captionof{table}{Results on multilingual multimodal tasks.}
\label{tab:vl_results}
\resizebox{\textwidth}{!}{
\begin{tabular}{lcccccc}
\toprule
\textbf{Models} & \textbf{XMMMU} & \textbf{MaXM} & \textbf{MaRVL} & \textbf{XGQA} & \textbf{CVQA} & \textbf{Afri-MCQA} \\
\midrule
LLaVA-v1.5-7B   & 31.48 & 20.10 & 49.78 & 30.62 & 39.32 & \cellcolor{secondgreen}24.61 \\
Task Arithmetic & 32.67 & 22.27 & 49.81 & 36.54 & \cellcolor{secondgreen}40.47 & 24.39 \\
DARE            & 31.38 & 19.54 & 49.76 & 31.51 & 38.42 & 24.32 \\
TIES-Merging    & 32.27 & 24.23 & 49.71 & 36.64 & 39.22 & 23.84 \\
Breadcrumbs     & 29.92 & 21.92 & 49.64 & 36.03 & 34.24 & 21.00 \\
PCB-Merging     & 27.95 & 0.06 & -- & -- & 7.80 & 0.00 \\
STF             & 32.05 & 11.09 & \cellcolor{secondgreen}50.87 & 10.87 & 34.59 & 23.86 \\
NeuroMerging    & \cellcolor{secondgreen}33.47 & 20.53 & 49.84 & 20.98 & 39.38 & 23.67 \\
Palo-7B         & 31.30 & 16.45 & 50.73 & \cellcolor{secondgreen}36.93 & 37.00 & 24.47 \\
Palo-13B        & 31.30 & \cellcolor{secondgreen}28.68 & 49.70 & 30.60 & 39.59 & 23.95 \\
% \rowcolor{neuripsblue}
DiM\textsuperscript{3}-7B            & \cellcolor{bestblue}\textbf{33.85} & \cellcolor{bestblue}\textbf{33.44} & \cellcolor{bestblue}\textbf{62.91} & \cellcolor{bestblue}\textbf{48.04} & \cellcolor{bestblue}\textbf{43.78} & \cellcolor{bestblue}\textbf{26.31} \\
\bottomrule
\end{tabular}
}
\end{minipage}
\hfill
\begin{minipage}[t]{0.37\textwidth}
\vspace{0pt}
\centering
\vspace{-0.1\baselineskip}
\includegraphics[width=\textwidth]{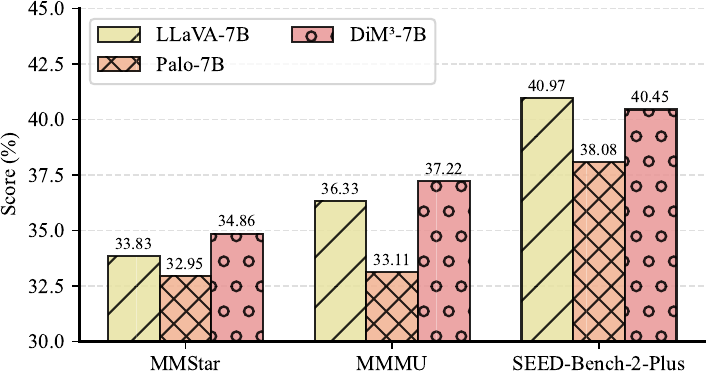}
\captionsetup{skip=3pt}
\captionof{figure}{Results on three general multimodal benchmarks.}
\label{fig:general_mm_results}
\end{minipage}
\vspace{-1.8em}
\end{figure*}

\begin{table*}[!htbp]
\centering
\scriptsize
\setlength{\tabcolsep}{2pt}
\renewcommand{\arraystretch}{1.00} 
\captionsetup{skip=2pt} 
\caption{Afri-MCQA results under the Qwen2- and Qwen3-based shared-backbone settings.}
\label{tab:qwen_afri}
\resizebox{\textwidth}{!}{
\begin{tabular}{llccccccccccccccccc}
\toprule
\textbf{Backbone} & \textbf{Model} 
& \textbf{Am} & \textbf{Ha} & \textbf{Ig} & \textbf{Lug} 
& \textbf{Om} & \textbf{Rw} & \textbf{Ki} & \textbf{So} 
& \textbf{Ti} & \textbf{Ak} & \textbf{Yo} & \textbf{Tn} 
& \textbf{Ny} & \textbf{Zu} & \textbf{Sot} & \textbf{Ln} & \textbf{Avg} \\
\midrule

\multirow{2}{*}{Qwen2} 
& Pangea-7B 
& \cellcolor{bestblue}\textbf{0.357} & 0.271 & 0.352 & 0.320 
& 0.130 & 0.370 & 0.253 & 0.310 
& 0.287 & 0.332 & 0.380 & 0.185 
& 0.265 & 0.198 & 0.131 & 0.389 & 0.283 \\

& DiM\textsuperscript{3} 
& 0.317 & \cellcolor{bestblue}\textbf{0.342} & \cellcolor{bestblue}\textbf{0.407} & \cellcolor{bestblue}\textbf{0.425} 
& \cellcolor{bestblue}\textbf{0.320} & \cellcolor{bestblue}\textbf{0.480} & \cellcolor{bestblue}\textbf{0.376} & \cellcolor{bestblue}\textbf{0.345} 
& \cellcolor{bestblue}\textbf{0.347} & \cellcolor{bestblue}\textbf{0.388} & \cellcolor{bestblue}\textbf{0.390} & \cellcolor{bestblue}\textbf{0.315} 
& \cellcolor{bestblue}\textbf{0.390} & \cellcolor{bestblue}\textbf{0.333} & \cellcolor{bestblue}\textbf{0.331} & \cellcolor{bestblue}\textbf{0.419} & \cellcolor{bestblue}\textbf{0.370} \\

\noalign{\vskip 1pt}
\hdashline
\noalign{\vskip 1pt}

\multirow{2}{*}{Qwen3} 
& Qwen3-VL-8B-Instruct 
& 0.377 & 0.342 & 0.347 & 0.415 
& 0.335 & 0.460 & 0.392 & \cellcolor{bestblue}\textbf{0.405} 
& 0.357 & \cellcolor{bestblue}\textbf{0.430} & 0.420 & 0.350 
& 0.380 & 0.302 & \cellcolor{bestblue}\textbf{0.323} & \cellcolor{bestblue}\textbf{0.490} & 0.383 \\

& DiM\textsuperscript{3} 
& \cellcolor{bestblue}\textbf{0.493} & \cellcolor{bestblue}\textbf{0.498} & \cellcolor{bestblue}\textbf{0.362} & \cellcolor{bestblue}\textbf{0.450} 
& \cellcolor{bestblue}\textbf{0.445} & \cellcolor{bestblue}\textbf{0.645} & \cellcolor{bestblue}\textbf{0.407} & \cellcolor{bestblue}\textbf{0.555} 
& \cellcolor{bestblue}\textbf{0.505} & 0.388 & \cellcolor{bestblue}\textbf{0.445} & \cellcolor{bestblue}\textbf{0.420} 
& \cellcolor{bestblue}\textbf{0.480} & \cellcolor{bestblue}\textbf{0.406} & 0.277 & 0.460 & \cellcolor{bestblue}\textbf{0.452} \\

\bottomrule
\end{tabular}
}
\vspace{-2.0em}
\end{table*}

\subsection{Multilingual Transfer in Multimodal Benchmarks}

We next evaluate multilingual capability transfer under multimodal inputs. Table~\ref{tab:vl_results} summarizes the average multilingual score $\mathrm{Avg}_{\mathrm{mul}}$ on six benchmarks: xMMMU, MaXM, MaRVL, xGQA, CVQA, and Afri-MCQA, while Appendix Tables~\ref{tab:xmmmu_maxm_combined}, \ref{tab:marvl_xgqa}, \ref{tab:multilingual_results}, and \ref{tab:cvqa} report the full language-wise results. Across all six benchmarks, our method achieves the best average multilingual performance, showing that DiM\textsuperscript{3} transfers multilingual capability effectively in multimodal settings.

A notable phenomenon is that some existing merging baselines become unstable in this heterogeneous multimodal setting. This is especially evident on generative multilingual tasks such as MaXM, where several methods produce repetitive, weakly relevant, or degenerate outputs; the problem is particularly severe for PCB-Merging, which we omit from later multimodal evaluations. This failure mode suggests that successful multimodal merging must preserve the coordination between language generation and visual grounding, not merely interpolate parameters. In this regard, DiM\textsuperscript{3} is substantially more robust than prior merging strategies.

The appendix further shows that the gains are broadly distributed across languages and are particularly pronounced in lower-resource settings. Table~\ref{tab:qwen_afri} further validates this trend under the additional shared-backbone settings in Table~\ref{tab:backbone_settings}, while Appendix Table~\ref{tab:multilingual_results} reports the corresponding results of different merging methods. In the Qwen3 setting, DiM\textsuperscript{3} improves Qwen3-VL-8B-Instruct from 0.383 to 0.452 average accuracy, confirming that the method remains effective across a different backbone family. In the Pangea-based Qwen2 setting, merging Pangea-7B with Qwen2-7B-Multilingual-RP raises the average score from 0.283 to 0.370, showing that DiM\textsuperscript{3} can further improve an already fine-tuned multilingual multimodal model. Compared with dedicated multilingual multimodal fine-tuning baselines such as Palo, our method remains highly competitive while requiring no end-to-end multilingual multimodal retraining.

\begin{figure}[t]
  \vspace{-1.1em}
  \centering
  \includegraphics[width=\linewidth]{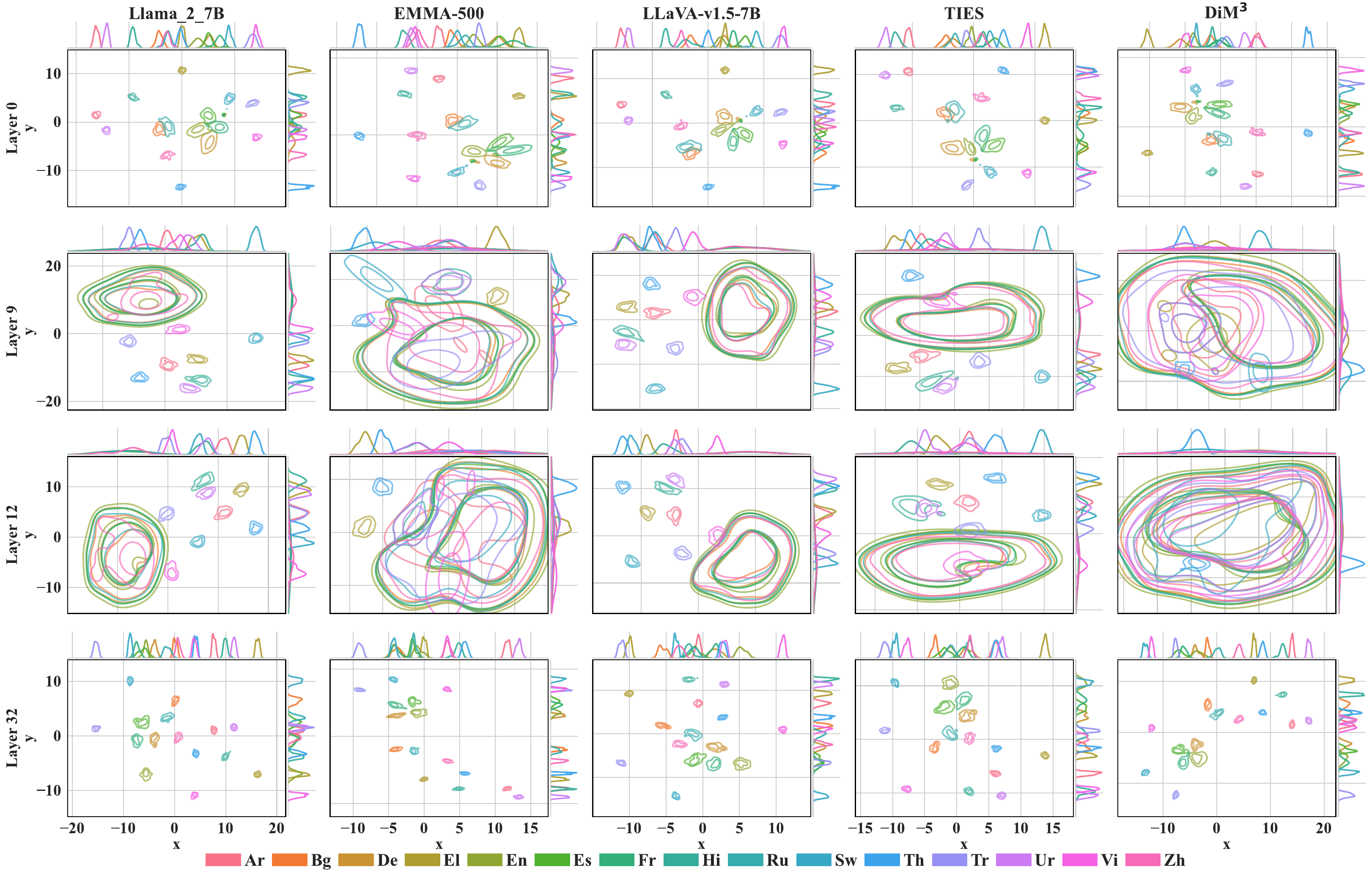}
  \captionsetup{skip=1.0pt}
  \caption{t-SNE visualizations of average-pooled hidden states on multilingual text inputs from XNLI across selected layers for LLaMA-2-7B, Emma-500, LLaVA-v1.5-7B, TIES-Merging, and DiM\textsuperscript{3}. Colors denote languages. DiM\textsuperscript{3} shows more overlapping language clusters at intermediate layers, consistent with stronger cross-lingual alignment.}
  \label{fig:text_only_tsne}
  \vspace{-1.3em}
\end{figure}

\subsection{Retaining General Multimodal Ability}

\begin{wraptable}{r}{0.4\columnwidth}
\vspace{-0.8em}
\centering
\scriptsize
\setlength{\tabcolsep}{3.2pt}
\renewcommand{\arraystretch}{0.82}
\captionsetup{skip=2pt}
\caption{Ablations on salience estimation and branch aggregation.}
\label{tab:ablation_merge_rule}
\resizebox{0.4\columnwidth}{!}{
\begin{tabular}{lccccc}
\toprule
\textbf{Variant} & \textbf{MMStar} & \textbf{MMMU} & \textbf{SEED} & \textbf{MaXM}& \textbf{MaRVL} \\
\midrule
DiM\textsuperscript{3} 
& \cellcolor{bestblue}\textbf{34.86} 
& \cellcolor{bestblue}\textbf{37.22} 
& 40.45 
& \cellcolor{bestblue}\textbf{33.44} 
& \cellcolor{bestblue}\textbf{62.91}\\
MinMax 
& 33.74 & 29.00 & 39.48 & 32.57 & 52.23\\
Ratio 
& 2.56 & 25.33 & 10.01 & 3.58 & 34.16\\
Raw 
& 33.64 & 31.00 & 39.48 & 33.00 &52.29\\
ZScore 
& \cellcolor{secondgreen}34.50 
& \cellcolor{secondgreen}36.67 
& \cellcolor{bestblue}\textbf{40.80} 
& 33.32 & 51.30\\
Dir-W 
& 34.27 
& \cellcolor{secondgreen}36.67 
& \cellcolor{secondgreen}40.71 
& 31.71 & 51.90\\
Mag-W 
& 34.26 & 36.33 & 40.40 & 33.19 & 51.03\\
Mag-Only 
& 33.89 & 34.78 & 40.10 
& \cellcolor{secondgreen}33.34 & 50.16\\
Dir-Only 
& 34.01 & 30.33 & 40.01 & 29.82 & \cellcolor{secondgreen}55.93\\
\bottomrule
\end{tabular}
}
\vspace{-1.0em}
\end{wraptable}

We further examine whether multilingual transfer compromises general multimodal ability. As shown in Figure~\ref{fig:general_mm_results}, DiM\textsuperscript{3}-7B remains comparable to the LLaVA-7B anchor, improving on MMStar and MMMU and staying close on SEED-Bench-2-Plus.
Meanwhile, it consistently outperforms the directly fine-tuned multilingual multimodal baseline Palo-7B, with scores of 34.86 vs.\ 32.95, 37.22 vs.\ 33.11, and 40.45 vs.\ 38.08, respectively. These results indicate that DiM\textsuperscript{3} better preserves the anchor model's general multimodal competence than the fine-tuned multilingual multimodal baseline. Overall, the trade-off is mild: the merged model remains comparable to LLaVA on general multimodal tasks while delivering substantially stronger multilingual gains.

\subsection{Ablations}\label{sec:ablation}

We ablate two components of the merge rule: the salience estimator in Eq.~\ref{eq:rank} and the branch aggregation rule in Eq.~\ref{eq:avg}. Table~\ref{tab:ablation_merge_rule} reports results on five representative multimodal benchmarks.
For salience estimation, we compare the proposed rank-based softmax with four alternatives: Raw, which applies softmax directly to the original deviations; ZScore, which standardizes deviations within each tensor before softmax; MinMax, which replaces rank normalization with min-max normalization; and Ratio, which assigns source weights in proportion to their raw deviations without the softmax comparison. For branch aggregation, we compare the symmetric average in Eq.~\ref{eq:avg} with four alternatives: Dir-Weighted (Dir-W) and Mag-Weighted (Mag-W), which use asymmetric combinations of the two branches, and Magnitude-Only (Mag-Only) and Direction-Only (Dir-Only), which keep only one branch.

Two conclusions emerge. First, rank-based salience is the most robust estimator: Ratio degrades sharply on all five benchmarks, while Raw and MinMax are consistently weaker; ZScore is more competitive, but still underperforms our method on four of the five tasks. Second, the joint two-branch design matters: both Magnitude-Only and Direction-Only fall short of the full model, and both asymmetric variants are slightly worse than the symmetric average.

\begin{figure}[!h]
\vspace{-1em}
  \centering
  \includegraphics[width=0.8\linewidth]{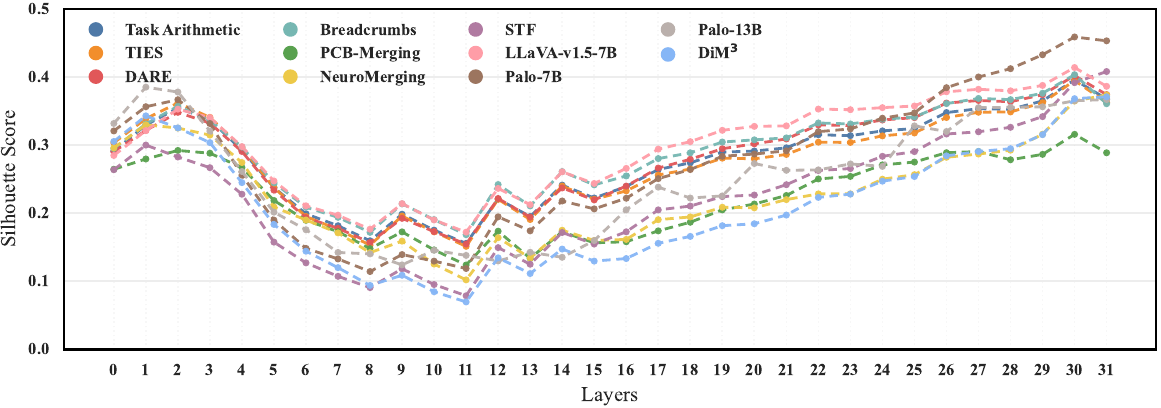}
  \captionsetup{skip=-0.05em}
  \caption{Layer-wise silhouette scores of multilingual hidden-state representations under multilingual multimodal inputs. Lower values indicate stronger cross-lingual overlap in the representation space.}
  \label{fig:xmmmu_silhouette2}
  \vspace{-2em}
\end{figure}

\section{Analysis: What Does the Merge Change?}\label{sec:analysis}

\paragraph{Hidden-State Geometry on Multilingual Text}

To understand why merging improves multilingual transfer, we analyze the hidden-state geometry of multilingual text inputs. Figure~\ref{fig:text_only_tsne} presents t-SNE visualizations of average-pooled hidden states on XNLI for LLaMA-2-7B, EMMA-500, LLaVA-v1.5-7B, TIES-Merging, and DiM\textsuperscript{3}; full-layer results and additional baselines are provided in Appendix Figures~\ref{fig:text_only_tsne_merging_methods}, \ref{fig:text_only_tsne_0-13}, \ref{fig:text_only_tsne_14-27}, and \ref{fig:text_only_tsne_28-32}.
Across models, we observe a consistent layer-wise pattern: lower layers remain largely language-dependent, reflecting lexical and surface-form information. In middle layers, however, DiM\textsuperscript{3} exhibits substantially stronger cross-lingual overlap than LLaMA-2-7B, LLaVA-v1.5-7B, and existing merging baselines. In upper layers, representations become more dispersed again across languages, indicating the reintroduction of language-sensitive structure for generation. Overall, this analysis suggests that merging primarily reshapes the intermediate semantic space, strengthening cross-lingual alignment without disrupting higher-layer task-sensitive representations.

\paragraph{Hidden-State Geometry in Multimodal Multilingual Inputs}

We next examine whether this representation-level effect persists under multimodal conditioning. 
Appendix Figure~\ref{fig:xmmmu_tsne} shows t-SNE visualizations of average-pooled hidden states from xMMMU for LLaVA-v1.5-7B and our merged model, and Figure~\ref{fig:xmmmu_silhouette2} reports the corresponding layer-wise silhouette scores for all baselines, where lower values indicate stronger cross-lingual overlap.
The multimodal results closely mirror the text-only setting. Lower layers retain clear language separation, whereas in the middle layers DiM\textsuperscript{3} yields substantially stronger cross-lingual overlap than LLaVA, indicating a more shared semantic space under visual inputs. This trend is supported quantitatively by the silhouette scores, where DiM\textsuperscript{3} consistently achieves lower values across most layers compared to LLaVA and other merging baselines.  The increase in higher layers further suggests that improved mid-layer alignment does not collapse representations into a single undifferentiated cluster.
Overall, these findings confirm that DiM\textsuperscript{3} primarily reshapes intermediate semantic representations, now under visual conditioning, providing a consistent explanation for the observed multilingual multimodal gains in Section~\ref{sec:results}.

\section{Conclusion}

In this work, we formulate multilingual multimodal composition as a training-free residual composition problem on a shared vision-language backbone, and show that multilingual and multimodal adaptations should be treated as heterogeneous residual updates rather than homogeneous parameter offsets. We propose a direction- and magnitude-aware merging strategy that selectively composes these updates while preserving the visual modules of the multimodal anchor. Experiments on multilingual text and multilingual multimodal benchmarks show that DiM\textsuperscript{3} consistently improves multilingual performance, outperforms representative merging baselines, remains competitive with dedicated multilingual multimodal fine-tuning, and largely preserves general multimodal ability. Overall, our results highlight that effective capability composition depends not only on the magnitude of parameter changes, but also on their directional alignment, with the primary impact occurring in intermediate semantic representations.

\bibliography{custom}

@inproceedings{ilharcoediting,
  title={Editing models with task arithmetic},
  author={Ilharco, Gabriel and Ribeiro, Marco Tulio and Wortsman, Mitchell and Schmidt, Ludwig and Hajishirzi, Hannaneh and Farhadi, Ali},
  booktitle={The Eleventh International Conference on Learning Representations}
}

@inproceedings{wortsman2022model,
  title={Model soups: averaging weights of multiple fine-tuned models improves accuracy without increasing inference time},
  author={Wortsman, Mitchell and Ilharco, Gabriel and Gadre, Samir Ya and Roelofs, Rebecca and Gontijo-Lopes, Raphael and Morcos, Ari S and Namkoong, Hongseok and Farhadi, Ali and Carmon, Yair and Kornblith, Simon and others},
  booktitle={International conference on machine learning},
  pages={23965--23998},
  year={2022},
  organization={PMLR}
}

@inproceedings{yue2024pangea,
  title={Pangea: A fully open multilingual multimodal llm for 39 languages},
  author={Yue, Xiang and Song, Yueqi and Asai, Akari and Kim, Seungone and de Dieu Nyandwi, Jean and Khanuja, Simran and Kantharuban, Anjali and Sutawika, Lintang and Ramamoorthy, Sathyanarayanan and Neubig, Graham},
  booktitle={The Thirteenth International Conference on Learning Representations},
  year={2024}
}

@article{tonja2026afri,
  title={Afri-MCQA: Multimodal Cultural Question Answering for African Languages},
  author={Tonja, Atnafu Lambebo and Anand, Srija and Villa-Cueva, Emilio and Azime, Israel Abebe and Alabi, Jesujoba Oluwadara and Mohamed, Muhidin A and Yadeta, Debela Desalegn and Abadi, Negasi Haile and Oppong, Abigail and Obiefuna, Nnaemeka Casmir and others},
  journal={arXiv preprint arXiv:2601.05699},
  year={2026}
}

@inproceedings{liu2025sens,
  title={Sens-merging: Sensitivity-guided parameter balancing for merging large language models},
  author={Liu, Shuqi and Wu, Han and He, Bowei and Han, Xiongwei and Yuan, Mingxuan and Song, Linqi},
  booktitle={Findings of the Association for Computational Linguistics: ACL 2025},
  pages={19243--19255},
  year={2025}
}

@article{salimans2016weight,
  title={Weight normalization: A simple reparameterization to accelerate training of deep neural networks},
  author={Salimans, Tim and Kingma, Durk P},
  journal={Advances in neural information processing systems},
  volume={29},
  year={2016}
}

@inproceedings{liu2024dora,
  title={Dora: Weight-decomposed low-rank adaptation},
  author={Liu, Shih-Yang and Wang, Chien-Yi and Yin, Hongxu and Molchanov, Pavlo and Wang, Yu-Chiang Frank and Cheng, Kwang-Ting and Chen, Min-Hung},
  booktitle={Forty-first International Conference on Machine Learning},
  year={2024}
}

@inproceedings{yu2024language,
  title={Language models are super mario: Absorbing abilities from homologous models as a free lunch},
  author={Yu, Le and Yu, Bowen and Yu, Haiyang and Huang, Fei and Li, Yongbin},
  booktitle={Forty-first International Conference on Machine Learning},
  year={2024}
}

@article{lu2024twin,
  title={Twin-merging: Dynamic integration of modular expertise in model merging},
  author={Lu, Zhenyi and Fan, Chenghao and Wei, Wei and Qu, Xiaoye and Chen, Dangyang and Cheng, Yu},
  journal={Advances in Neural Information Processing Systems},
  volume={37},
  pages={78905--78935},
  year={2024}
}

@article{yadav2023ties,
  title={Ties-merging: Resolving interference when merging models},
  author={Yadav, Prateek and Tam, Derek and Choshen, Leshem and Raffel, Colin A and Bansal, Mohit},
  journal={Advances in neural information processing systems},
  volume={36},
  pages={7093--7115},
  year={2023}
}

@inproceedings{davari2024model,
  title={Model breadcrumbs: Scaling multi-task model merging with sparse masks},
  author={Davari, MohammadReza and Belilovsky, Eugene},
  booktitle={European Conference on Computer Vision},
  pages={270--287},
  year={2024},
  organization={Springer}
}

@article{du2024parameter,
  title={Parameter competition balancing for model merging},
  author={Du, Guodong and Lee, Junlin and Li, Jing and Jiang, Runhua and Guo, Yifei and Yu, Shuyang and Liu, Hanting and Goh, Sim K and Tang, Ho-Kin and He, Daojing and others},
  journal={Advances in Neural Information Processing Systems},
  volume={37},
  pages={84746--84776},
  year={2024}
}

@inproceedings{qiu2025superpose,
  title={Superpose Task-specific Features for Model Merging},
  author={Qiu, Haiquan and Wu, You and Li, Dong and Guo, Jianmin and Yao, Quanming},
  booktitle={Proceedings of the 2025 Conference on Empirical Methods in Natural Language Processing},
  pages={4200--4214},
  year={2025}
}

@inproceedings{li2023blip,
  title={Blip-2: Bootstrapping language-image pre-training with frozen image encoders and large language models},
  author={Li, Junnan and Li, Dongxu and Savarese, Silvio and Hoi, Steven},
  booktitle={International conference on machine learning},
  pages={19730--19742},
  year={2023},
  organization={PMLR}
}

@inproceedings{gao-etal-2024-multilingual,
    title = "Multilingual Pretraining and Instruction Tuning Improve Cross-Lingual Knowledge Alignment, But Only Shallowly",
    author = "Gao, Changjiang  and
      Hu, Hongda  and
      Hu, Peng  and
      Chen, Jiajun  and
      Li, Jixing  and
      Huang, Shujian",
    editor = "Duh, Kevin  and
      Gomez, Helena  and
      Bethard, Steven",
    booktitle = "Proceedings of the 2024 Conference of the North American Chapter of the Association for Computational Linguistics: Human Language Technologies (Volume 1: Long Papers)",
    month = jun,
    year = "2024",
    address = "Mexico City, Mexico",
    publisher = "Association for Computational Linguistics",
    url = "https://aclanthology.org/2024.naacl-long.339/",
    doi = "10.18653/v1/2024.naacl-long.339",
    pages = "6101--6117"
}

@inproceedings{hammerl2024understanding,
  title={Understanding cross-lingual Alignment—A survey},
  author={H{\"a}mmerl, Katharina and Libovick{\`y}, Jind{\v{r}}ich and Fraser, Alexander},
  booktitle={Findings of the Association for Computational Linguistics: ACL 2024},
  pages={10922--10943},
  year={2024}
}

@inproceedings{fang2025see,
  title={To see a world in a spark of neuron: Disentangling multi-task interference for training-free model merging},
  author={Fang, Zitao and Du, Guodong and Yu, Shuyang and Guo, Yifei and Zhang, Yiwei and Cao, Yiyao and Li, Jing and Tang, Ho-Kin and Goh, Sim Kuan},
  booktitle={Proceedings of the 2025 Conference on Empirical Methods in Natural Language Processing},
  pages={15731--15751},
  year={2025}
}

@article{ji2024emma,
  title={Emma-500: Enhancing massively multilingual adaptation of large language models},
  author={Ji, Shaoxiong and Li, Zihao and Paavola, Jaakko and Lin, Peiqin and Chen, Pinzhen and O'Brien, Dayy{\'a}n and Luo, Hengyu and Sch{\"u}tze, Hinrich and Tiedemann, J{\"o}rg and Haddow, Barry},
  journal={arXiv preprint arXiv:2409.17892},
  year={2024}
}

@inproceedings{rasheed2025palo,
  title={Palo: A polyglot large multimodal model for 5b people},
  author={Rasheed, Hanoona and Maaz, Muhammad and Shaker, Abdelrahman and Khan, Salman and Cholakkal, Hisham and Anwer, Rao M and Baldwin, Tim and Felsberg, Michael and Khan, Fahad S},
  booktitle={Proceedings of the Winter Conference on Applications of Computer Vision},
  pages={1745--1754},
  year={2025}
}

@article{singh2025openai,
  title={Openai gpt-5 system card},
  author={Singh, Aaditya and Fry, Adam and Perelman, Adam and Tart, Adam and Ganesh, Adi and El-Kishky, Ahmed and McLaughlin, Aidan and Low, Aiden and Ostrow, AJ and Ananthram, Akhila and others},
  journal={arXiv preprint arXiv:2601.03267},
  year={2025}
}

@article{wang2025internvl3,
  title={Internvl3. 5: Advancing open-source multimodal models in versatility, reasoning, and efficiency},
  author={Wang, Weiyun and Gao, Zhangwei and Gu, Lixin and Pu, Hengjun and Cui, Long and Wei, Xingguang and Liu, Zhaoyang and Jing, Linglin and Ye, Shenglong and Shao, Jie and others},
  journal={arXiv preprint arXiv:2508.18265},
  year={2025}
}

@article{team2023gemini,
  title={Gemini: a family of highly capable multimodal models},
  author={Team, Gemini and Anil, Rohan and Borgeaud, Sebastian and Alayrac, Jean-Baptiste and Yu, Jiahui and Soricut, Radu and Schalkwyk, Johan and Dai, Andrew M and Hauth, Anja and Millican, Katie and others},
  journal={arXiv preprint arXiv:2312.11805},
  year={2023}
}

@inproceedings{xue2021mt5,
  title={mT5: A massively multilingual pre-trained text-to-text transformer},
  author={Xue, Linting and Constant, Noah and Roberts, Adam and Kale, Mihir and Al-Rfou, Rami and Siddhant, Aditya and Barua, Aditya and Raffel, Colin},
  booktitle={Proceedings of the 2021 conference of the North American chapter of the association for computational linguistics: Human language technologies},
  pages={483--498},
  year={2021}
}

@article{liu2023visual,
  title={Visual instruction tuning},
  author={Liu, Haotian and Li, Chunyuan and Wu, Qingyang and Lee, Yong Jae},
  journal={Advances in neural information processing systems},
  volume={36},
  pages={34892--34916},
  year={2023}
}

@article{touvron2023llama,
  title={Llama: Open and efficient foundation language models},
  author={Touvron, Hugo and Lavril, Thibaut and Izacard, Gautier and Martinet, Xavier and Lachaux, Marie-Anne and Lacroix, Timoth{\'e}e and Rozi{\`e}re, Baptiste and Goyal, Naman and Hambro, Eric and Azhar, Faisal and others},
  journal={arXiv preprint arXiv:2302.13971},
  year={2023}
}

@inproceedings{zhu2024model,
  title={Model tailor: mitigating catastrophic forgetting in multi-modal large language models},
  author={Zhu, Didi and Sun, Zhongyi and Li, Zexi and Shen, Tao and Yan, Ke and Ding, Shouhong and Wu, Chao and Kuang, Kun},
  booktitle={Proceedings of the 41st International Conference on Machine Learning},
  pages={62581--62598},
  year={2024}
}

@inproceedings{ponti2020xcopa,
  title={XCOPA: A multilingual dataset for causal commonsense reasoning},
  author={Ponti, Edoardo Maria and Glava{\v{s}}, Goran and Majewska, Olga and Liu, Qianchu and Vuli{\'c}, Ivan and Korhonen, Anna},
  booktitle={Proceedings of the 2020 Conference on Empirical Methods in Natural Language Processing (EMNLP)},
  pages={2362--2376},
  year={2020}
}

@inproceedings{lin2022few,
  title={Few-shot learning with multilingual generative language models},
  author={Lin, Xi Victoria and Mihaylov, Todor and Artetxe, Mikel and Wang, Tianlu and Chen, Shuohui and Simig, Daniel and Ott, Myle and Goyal, Naman and Bhosale, Shruti and Du, Jingfei and others},
  booktitle={Proceedings of the 2022 conference on empirical methods in natural language processing},
  pages={9019--9052},
  year={2022}
}

@inproceedings{conneau2018xnli,
  title={XNLI: Evaluating cross-lingual sentence representations},
  author={Conneau, Alexis and Rinott, Ruty and Lample, Guillaume and Williams, Adina and Bowman, Samuel and Schwenk, Holger and Stoyanov, Veselin},
  booktitle={Proceedings of the 2018 conference on empirical methods in natural language processing},
  pages={2475--2485},
  year={2018}
}

@inproceedings{schneider2024m5,
  title={M5--a diverse benchmark to assess the performance of large multimodal models across multilingual and multicultural vision-language tasks},
  author={Schneider, Florian and Sitaram, Sunayana},
  booktitle={Findings of the Association for Computational Linguistics: EMNLP 2024},
  pages={4309--4345},
  year={2024}
}

@inproceedings{changpinyo2023maxm,
  title={Maxm: Towards multilingual visual question answering},
  author={Changpinyo, Soravit and Xue, Linting and Yarom, Michal and Thapliyal, Ashish and Szpektor, Idan and Amelot, Julien and Chen, Xi and Soricut, Radu},
  booktitle={Findings of the Association for Computational Linguistics: EMNLP 2023},
  pages={2667--2682},
  year={2023}
}

@inproceedings{liu2021visually,
  title={Visually grounded reasoning across languages and cultures},
  author={Liu, Fangyu and Bugliarello, Emanuele and Ponti, Edoardo Maria and Reddy, Siva and Collier, Nigel and Elliott, Desmond},
  booktitle={Proceedings of the 2021 Conference on Empirical Methods in Natural Language Processing},
  pages={10467--10485},
  year={2021}
}

@inproceedings{pfeiffer2022xgqa,
  title={xGQA: Cross-lingual visual question answering},
  author={Pfeiffer, Jonas and Geigle, Gregor and Kamath, Aishwarya and Steitz, Jan-Martin O and Roth, Stefan and Vuli{\'c}, Ivan and Gurevych, Iryna},
  booktitle={Findings of the association for computational linguistics: ACL 2022},
  pages={2497--2511},
  year={2022}
}

@article{yang2025qwen3,
  title={Qwen3 technical report},
  author={Yang, An and Li, Anfeng and Yang, Baosong and Zhang, Beichen and Hui, Binyuan and Zheng, Bo and Yu, Bowen and Gao, Chang and Huang, Chengen and Lv, Chenxu and others},
  journal={arXiv preprint arXiv:2505.09388},
  year={2025}
}

@article{yu2026afriquellm,
  title={AfriqueLLM: How Data Mixing and Model Architecture Impact Continued Pre-training for African Languages},
  author={Yu, Hao and Xu, Tianyi and Hedderich, Michael A and Hamidouche, Wassim and Zamir, Syed Waqas and Adelani, David Ifeoluwa},
  journal={arXiv preprint arXiv:2601.06395},
  year={2026}
}

@article{qwen2,
  title={Qwen2 Technical Report},
  year={2024}
}

@article{bai2025qwen3,
  title={Qwen3-vl technical report},
  author={Bai, Shuai and Cai, Yuxuan and Chen, Ruizhe and Chen, Keqin and Chen, Xionghui and Cheng, Zesen and Deng, Lianghao and Ding, Wei and Gao, Chang and Ge, Chunjiang and others},
  journal={arXiv preprint arXiv:2511.21631},
  year={2025}
}

@article{chen2024we,
  title={Are we on the right way for evaluating large vision-language models?},
  author={Chen, Lin and Li, Jinsong and Dong, Xiaoyi and Zhang, Pan and Zang, Yuhang and Chen, Zehui and Duan, Haodong and Wang, Jiaqi and Qiao, Yu and Lin, Dahua and others},
  journal={Advances in Neural Information Processing Systems},
  volume={37},
  pages={27056--27087},
  year={2024}
}

@inproceedings{yue2024mmmu,
  title={Mmmu: A massive multi-discipline multimodal understanding and reasoning benchmark for expert agi},
  author={Yue, Xiang and Ni, Yuansheng and Zhang, Kai and Zheng, Tianyu and Liu, Ruoqi and Zhang, Ge and Stevens, Samuel and Jiang, Dongfu and Ren, Weiming and Sun, Yuxuan and others},
  booktitle={Proceedings of the IEEE/CVF conference on computer vision and pattern recognition},
  pages={9556--9567},
  year={2024}
}

@article{li2024seed,
  title={Seed-bench-2-plus: Benchmarking multimodal large language models with text-rich visual comprehension},
  author={Li, Bohao and Ge, Yuying and Chen, Yi and Ge, Yixiao and Zhang, Ruimao and Shan, Ying},
  journal={arXiv preprint arXiv:2404.16790},
  year={2024}
}

@inproceedings{mogrovejo2024cvqa,
  title={CVQA: Culturally-diverse multilingual visual question answering benchmark},
  author={Mogrovejo, David Orlando Romero and Lyu, Chenyang and Wibowo, Haryo Akbarianto and G{\'o}ngora, Santiago and Mandal, Aishik and Purkayastha, Sukannya and Ortiz-Barajas, Jesus-German and Cueva, Emilio Villa and Baek, Jinheon and Jeong, Soyeong and others},
  booktitle={The Thirty-eight Conference on Neural Information Processing Systems Datasets and Benchmarks Track},
  year={2024}
}

@article{biderman2024lessons,
  title={Lessons from the trenches on reproducible evaluation of language models},
  author={Biderman, Stella and Schoelkopf, Hailey and Sutawika, Lintang and Gao, Leo and Tow, Jonathan and Abbasi, Baber and Aji, Alham Fikri and Ammanamanchi, Pawan Sasanka and Black, Sidney and Clive, Jordan and others},
  journal={arXiv preprint arXiv:2405.14782},
  year={2024}
}

@inproceedings{zhang2025lmms,
  title={Lmms-eval: Reality check on the evaluation of large multimodal models},
  author={Zhang, Kaichen and Li, Bo and Zhang, Peiyuan and Pu, Fanyi and Cahyono, Joshua Adrian and Hu, Kairui and Liu, Shuai and Zhang, Yuanhan and Yang, Jingkang and Li, Chunyuan and others},
  booktitle={Findings of the Association for Computational Linguistics: NAACL 2025},
  pages={881--916},
  year={2025}
}

@article{wei2025optmerge,
  title={OptMerge: Unifying Multimodal LLM Capabilities and Modalities via Model Merging},
  author={Wei, Yongxian and Cheng, Runxi and Jin, Weike and Yang, Enneng and Shen, Li and Hou, Lu and Du, Sinan and Yuan, Chun and Cao, Xiaochun and Tao, Dacheng},
  journal={arXiv preprint arXiv:2505.19892},
  year={2025}
}

@inproceedings{bu2025alignx,
  title={AlignX: Advancing Multilingual Large Language Models with Multilingual Representation Alignment},
  author={Bu, Mengyu and Zhang, Shaolei and He, Zhongjun and Wu, Hua and Feng, Yang},
  booktitle={Proceedings of the 2025 Conference on Empirical Methods in Natural Language Processing},
  pages={6471--6500},
  year={2025}
}

@article{zhao2024large,
  title={How do large language models handle multilingualism?},
  author={Zhao, Yiran and Zhang, Wenxuan and Chen, Guizhen and Kawaguchi, Kenji and Bing, Lidong},
  journal={Advances in Neural Information Processing Systems},
  volume={37},
  pages={15296--15319},
  year={2024}
}

@article{chen2026understanding,
  title={Understanding Multilingualism in Mixture-of-Experts LLMs: Routing Mechanism, Expert Specialization, and Layerwise Steering},
  author={Chen, Yuxin and Cai, Zhengzhou and Ji, Xiangtian and Zhao, Weixiang and Zhang, An and Wang, Xiang and Chua, Tat-Seng},
  journal={arXiv preprint arXiv:2601.14050},
  year={2026}
}

@inproceedings{huang2025neurons,
  title={From Neurons to Semantics: Evaluating Cross-Linguistic Alignment Capabilities of Large Language Models via Neurons Alignment},
  author={Huang, Chongxuan and Ye, Yongshi and Fu, Biao and Su, Qifeng and Shi, Xiaodong},
  booktitle={Proceedings of the 63rd Annual Meeting of the Association for Computational Linguistics (Volume 1: Long Papers)},
  pages={28956--28974},
  year={2025}
}

@article{bu2026language,
  title={Language on Demand, Knowledge at Core: Composing LLMs with Encoder-Decoder Translation Models for Extensible Multilinguality},
  author={Bu, Mengyu and Feng, Yang},
  journal={arXiv preprint arXiv:2603.17512},
  year={2026}
}

@inproceedings{chang2022geometry,
  title={The geometry of multilingual language model representations},
  author={Chang, Tyler and Tu, Zhuowen and Bergen, Benjamin},
  booktitle={Proceedings of the 2022 Conference on Empirical Methods in Natural Language Processing},
  pages={119--136},
  year={2022}
}

@inproceedings{lopo2025language,
  title={Language Surgery in Multilingual Large Language Models},
  author={Lopo, Joanito Agili and Habibi, Muhammad Ravi Shulthan and Wong, Tack Hwa and Ghozali, Muhammad Ilham and Koto, Fajri and Winata, Genta Indra and Limkonchotiwat, Peerat and Aji, Alham Fikri and Cahyawijaya, Samuel},
  booktitle={Proceedings of the 5th Workshop on Multilingual Representation Learning (MRL 2025)},
  pages={438--467},
  year={2025}
}

@inproceedings{wendler2024llamas,
  title={Do llamas work in english? on the latent language of multilingual transformers},
  author={Wendler, Chris and Veselovsky, Veniamin and Monea, Giovanni and West, Robert},
  booktitle={Proceedings of the 62nd Annual Meeting of the Association for Computational Linguistics (Volume 1: Long Papers)},
  pages={15366--15394},
  year={2024}
}

@inproceedings{pikabea2025breaking,
  title={Breaking language barriers in visual language models via multilingual textual regularization},
  author={Pikabea, I{\~n}igo and Lacunza, I{\~n}aki and Velasco, Oriol Pareras and Escolano, Carlos and Gonzalez-Agirre, Aitor and Hernando, Javier and Villegas, Marta},
  booktitle={Proceedings of the 14th International Joint Conference on Natural Language Processing and the 4th Conference of the Asia-Pacific Chapter of the Association for Computational Linguistics},
  pages={299--337},
  year={2025}
}

@inproceedings{geigle2024mblip,
  title={mblip: Efficient bootstrapping of multilingual vision-llms},
  author={Geigle, Gregor and Jain, Abhay and Timofte, Radu and Glava{\v{s}}, Goran},
  booktitle={Proceedings of the 3rd Workshop on Advances in Language and Vision Research (ALVR)},
  pages={7--25},
  year={2024}
}

@inproceedings{sunparrot,
  title={Parrot: Multilingual Visual Instruction Tuning},
  author={Sun, Hai-Long and Zhou, Da-Wei and Li, Yang and Lu, Shiyin and Yi, Chao and Chen, Qing-Guo and Xu, Zhao and Luo, Weihua and Zhang, Kaifu and Zhan, De-Chuan and others},
  booktitle={Forty-second International Conference on Machine Learning}
}

@inproceedings{fan2025language,
  title={Language-Specific Layer Matters: Efficient Multilingual Enhancement for Large Vision-Language Models},
  author={Fan, Yuchun and Wang, Yilin and Mu, Yongyu and Huang, Lei and Li, Bei and Feng, Xiaocheng and Xiao, Tong and Zhu, Jingbo},
  booktitle={Findings of the Association for Computational Linguistics: EMNLP 2025},
  pages={12473--12500},
  year={2025}
}

@article{alayrac2022flamingo,
  title={Flamingo: a visual language model for few-shot learning},
  author={Alayrac, Jean-Baptiste and Donahue, Jeff and Luc, Pauline and Miech, Antoine and Barr, Iain and Hasson, Yana and Lenc, Karel and Mensch, Arthur and Millican, Katherine and Reynolds, Malcolm and others},
  journal={Advances in neural information processing systems},
  volume={35},
  pages={23716--23736},
  year={2022}
}

@inproceedings{geigle2025centurio,
  title={Centurio: On drivers of multilingual ability of large vision-language model},
  author={Geigle, Gregor and Schneider, Florian and Holtermann, Carolin and Biemann, Chris and Timofte, Radu and Lauscher, Anne and Glava{\v{s}}, Goran},
  booktitle={Proceedings of the 63rd Annual Meeting of the Association for Computational Linguistics (Volume 1: Long Papers)},
  pages={2831--2881},
  year={2025}
}

@article{wang2025more,
  title={Why do more experts fail? a theoretical analysis of model merging},
  author={Wang, Zijing and Xu, Xingle and Liu, Yongkang and Zhang, Yiqun and Lin, Peiqin and Feng, Shi and Yang, Xiaocui and Wang, Daling and Sch{\"u}tze, Hinrich},
  journal={arXiv preprint arXiv:2505.21226},
  year={2025}
}

@article{wang2025scaling,
  title={Scaling Intelligence Through Model Merging: A Comprehensive Survey},
  author={Wang, Zijing and Liu, Yongkang and Luo, Yingfeng and Wang, Ming and Song, Zhen and Feng, Shi and Yang, Xiaocui and Lin, Dingyang and Wang, Daling and Zhang, Yifei and others},
  journal={Authorea Preprints},
  year={2025},
  publisher={Authorea}
}

@article{yang2026model,
  title={Model merging in llms, mllms, and beyond: Methods, theories, applications, and opportunities},
  author={Yang, Enneng and Shen, Li and Guo, Guibing and Wang, Xingwei and Cao, Xiaochun and Zhang, Jie and Tao, Dacheng},
  journal={ACM Computing Surveys},
  volume={58},
  number={8},
  pages={1--41},
  year={2026},
  publisher={ACM New York, NY}
}

@inproceedings{marczak2024magmax,
  title={Magmax: Leveraging model merging for seamless continual learning},
  author={Marczak, Daniel and Twardowski, Bart{\l}omiej and Trzci{\'n}ski, Tomasz and Cygert, Sebastian},
  booktitle={European Conference on Computer Vision},
  pages={379--395},
  year={2024},
  organization={Springer}
}

@article{he2024localize,
  title={Localize-and-stitch: Efficient model merging via sparse task arithmetic},
  author={He, Yifei and Hu, Yuzheng and Lin, Yong and Zhang, Tong and Zhao, Han},
  journal={arXiv preprint arXiv:2408.13656},
  year={2024}
}

@inproceedings{chengwhoever,
  title={Whoever Started the interference Should End It: Guiding Data-Free Model Merging via Task Vectors},
  author={Cheng, Runxi and Xiong, Feng and Wei, Yongxian and Zhu, Wanyun and Yuan, Chun},
  booktitle={Forty-second International Conference on Machine Learning}
}

@inproceedings{du2025adamms,
  title={Adamms: Model merging for heterogeneous multimodal large language models with unsupervised coefficient optimization},
  author={Du, Yiyang and Wang, Xiaochen and Chen, Chi and Ye, Jiabo and Wang, Yiru and Li, Peng and Yan, Ming and Zhang, Ji and Huang, Fei and Sui, Zhifang and others},
  booktitle={Proceedings of the Computer Vision and Pattern Recognition Conference},
  pages={9413--9422},
  year={2025}
}

@article{wang2026plam,
  title={PlaM: Training-Free Plateau-Guided Model Merging for Better Visual Grounding in MLLMs},
  author={Wang, Zijing and Liu, Yongkang and Wang, Mingyang and Nie, Ercong and Chen, Deyuan and Zhao, Zhengjie and Feng, Shi and Wang, Daling and Yang, Xiaocui and Zhang, Yifei and others},
  journal={arXiv preprint arXiv:2601.07645},
  year={2026}
}

@article{zheng2025unveiling,
  title={Unveiling Intrinsic Text Bias in Multimodal Large Language Models through Attention Key-Space Analysis},
  author={Zheng, Xinhan and Wu, Huyu and Wang, Xueting and Jiang, Haiyun},
  journal={arXiv preprint arXiv:2510.26721},
  year={2025}
}

@misc{luo2025sizedoesfitall,
      title={One Size Does Not Fit All: A Distribution-Aware Sparsification for More Precise Model Merging}, 
      author={Yingfeng Luo and Dingyang Lin and Junxin Wang and Ziqiang Xu and Kaiyan Chang and Tong Zheng and Bei Li and Anxiang Ma and Tong Xiao and Zhengtao Yu and Jingbo Zhu},
      year={2025},
      eprint={2508.06163},
      archivePrefix={arXiv},
      primaryClass={cs.CL},
      url={https://arxiv.org/abs/2508.06163}, 
}

@inproceedings{nguyen-etal-2024-culturax,
    title = "{C}ultura{X}: A Cleaned, Enormous, and Multilingual Dataset for Large Language Models in 167 Languages",
    author = "Nguyen, Thuat  and
      Nguyen, Chien Van  and
      Lai, Viet Dac  and
      Man, Hieu  and
      Ngo, Nghia Trung  and
      Dernoncourt, Franck  and
      Rossi, Ryan A.  and
      Nguyen, Thien Huu",
    editor = "Calzolari, Nicoletta  and
      Kan, Min-Yen  and
      Hoste, Veronique  and
      Lenci, Alessandro  and
      Sakti, Sakriani  and
      Xue, Nianwen",
    booktitle = "Proceedings of the 2024 Joint International Conference on Computational Linguistics, Language Resources and Evaluation (LREC-COLING 2024)",
    month = may,
    year = "2024",
    address = "Torino, Italia",
    publisher = "ELRA and ICCL",
    url = "https://aclanthology.org/2024.lrec-main.377/",
    pages = "4226--4237"
}

@inproceedings{tao-etal-2024-unlocking,
    title = "Unlocking the Potential of Model Merging for Low-Resource Languages",
    author = "Tao, Mingxu  and
      Zhang, Chen  and
      Huang, Quzhe  and
      Ma, Tianyao  and
      Huang, Songfang  and
      Zhao, Dongyan  and
      Feng, Yansong",
    editor = "Al-Onaizan, Yaser  and
      Bansal, Mohit  and
      Chen, Yun-Nung",
    booktitle = "Findings of the Association for Computational Linguistics: EMNLP 2024",
    month = nov,
    year = "2024",
    address = "Miami, Florida, USA",
    publisher = "Association for Computational Linguistics",
    url = "https://aclanthology.org/2024.findings-emnlp.508/",
    doi = "10.18653/v1/2024.findings-emnlp.508",
    pages = "8705--8720"
}

\clearpage

\appendix

\section{Baselines and Benchmarks}
\label{app:exp}

All experiments were conducted on NVIDIA RTX A6000 GPUs with 48GB memory per GPU.

\subsection{Merging Methods}

\paragraph{Task Arithmetic.}
Task Arithmetic~\cite{ilharcoediting} represents a task as the parameter difference between a fine-tuned model and its base model, and composes capabilities through linear operations on these task vectors. It provides the basic formulation of training-free model editing and serves as the foundation for many subsequent merging methods.

\paragraph{DARE.}
DARE~\cite{yu2024language} sparsifies fine-tuning deltas by randomly dropping a large fraction of parameters and rescaling the remainder. This procedure is motivated by the redundancy of task vectors and is intended to reduce interference while largely preserving the effect of the original update.

\paragraph{TIES-Merging.}
TIES-Merging~\cite{yadav2023ties} trims small-magnitude updates, resolves sign disagreement across task vectors, and merges only the coordinates that are consistent with the elected sign. The method is designed to explicitly reduce destructive interference caused by conflicting residual directions.

\paragraph{Breadcrumbs.}
Breadcrumbs~\cite{davari2024model} constructs sparse masks over task vectors to retain informative updates while filtering out both negligible changes and large disruptive outliers. 

\paragraph{PCB-Merging.}
PCB-Merging~\cite{du2024parameter} formulates merging as a parameter competition balancing problem. It estimates parameter importance within each task model and competition across task models, then selectively retains and rescales updates to preserve task-relevant information while reducing harmful conflicts during composition.

\paragraph{STF.}
STF~\cite{qiu2025superpose} approaches model merging from a representation perspective by superposing task-specific features rather than relying solely on direct parameter arithmetic. It emphasizes that compatibility across tasks can be modeled through structured interactions in feature space, providing an alternative to purely parameter-level heuristics.

\paragraph{NeuroMerging.}
NeuroMerging~\cite{fang2025see} studies merging through neuronal structure by disentangling task-specific information into complementary neuronal subspaces. It aims to mitigate multi-task interference at the neuron level and thereby better preserve complementary knowledge across source models.

\subsection{Models}

\paragraph{Emma-500.}
Emma-500~\cite{ji2024emma} is a massively multilingual language model obtained by continual pre-training of LLaMA-2-7B on the MaLA corpus, with coverage extending to 546 languages. The model is designed to improve multilingual performance, especially for underrepresented languages, while preserving compatibility with the original LLaMA-2-7B backbone. In our experiments, Emma-500 serves as the multilingual source model that provides the multilingual residual to be merged into the multimodal anchor.

\paragraph{Palo.}
Palo~\cite{rasheed2025palo} is a multilingual multimodal model developed to support visual reasoning in 10 major languages covering roughly 5 billion people. It is built through multilingual multimodal instruction tuning, using a semi-automated translation pipeline to extend English multimodal supervision to multiple target languages, and is reported at several model scales including 7B. In our experiments, Palo is used as a trained multilingual multimodal reference model rather than a merging source, providing a comparison point for performance achievable through additional multilingual multimodal training.

\paragraph{PLAST-7B.}
PLAST~\cite{fan2025language} is an efficient multilingual enhancement method for large vision-language models. Starting from LLaVA-v1.5-7B, it identifies shallow language-specific layers through neuron-activation analysis and then fine-tunes only these layers using question-translation pairs, with the goal of improving multilingual understanding while updating only a small fraction of parameters.

\paragraph{AfriqueQwen-8B.}
AfriqueQwen-8B~\cite{yu2026afriquellm} is a multilingual language model obtained by continued pre-training Qwen3-8B on a 26B token mixture covering 20 African languages. It is part of the AfriqueLLM suite, which systematically studies how data composition and model architecture affect adaptation to African languages. In our experiments, AfriqueQwen-8B serves as the multilingual source model in the Qwen3-based shared-backbone setting.

\paragraph{Pangea-7B.}
Pangea-7B~\cite{yue2024pangea} is a multilingual multimodal model trained through multilingual multimodal instruction tuning on PangeaIns, a 6M-sample dataset spanning 39 languages. It is designed to support multilingual and cross-cultural multimodal reasoning, and is evaluated in the original paper on PangeaBench, a benchmark suite covering 14 datasets in 47 languages. In our experiments, Pangea-7B is used as an already trained multilingual multimodal checkpoint in the Qwen2-based shared-backbone setting to test whether the proposed method can further improve an existing multilingual multimodal model.

\subsection{Benchmarks}

\paragraph{XCOPA.}
XCOPA~\cite{ponti2020xcopa} is a multilingual benchmark for causal commonsense reasoning in 11 languages. Each example asks the model to choose the more plausible cause or effect for a given event, making it a compact test of cross-lingual commonsense understanding. 

\paragraph{XStoryCloze.}
XStoryCloze~\cite{lin2022few} is a multilingual story completion benchmark spanning 11 languages. The task requires selecting the more coherent ending for a short narrative context, probing discourse-level commonsense and narrative understanding across languages. 

\paragraph{XNLI.}
XNLI~\cite{conneau2018xnli} is a standard cross-lingual natural language inference benchmark covering 15 languages. It extends MultiNLI with translated development and test sets and is widely used to evaluate multilingual sentence understanding.

\paragraph{xMMMU.}
xMMMU~\cite{yue2024pangea} is the multilingual extension of MMMU introduced in PangeaBench, designed to evaluate multilingual multimodal understanding and reasoning across a broad range of subjects and languages.

\paragraph{MaXM.}
MaXM~\cite{changpinyo2023maxm} is a multilingual visual question answering benchmark covering seven languages. It is constructed through a scalable translation-based pipeline and serves as a test-only benchmark for multilingual VQA.

\paragraph{MaRVL.}
MaRVL~\cite{liu2021visually} is a multilingual and multicultural benchmark for visually grounded reasoning. It pairs images with statements written by native speakers and asks the model to judge whether each statement is true or false, with data designed to reflect linguistic and cultural diversity. 

\paragraph{xGQA.}
xGQA~\cite{pfeiffer2022xgqa} extends the GQA benchmark to seven typologically diverse languages for cross-lingual visual question answering. It is designed to reveal multilingual multimodal misalignment by measuring how well models transfer visual reasoning ability beyond English. This makes it particularly informative for testing whether multilingual transfer remains compatible with visual grounding.

\paragraph{CVQA.}
CVQA~\cite{mogrovejo2024cvqa} is a culturally diverse multilingual visual question answering benchmark that goes beyond translation-based multilingual evaluation. It includes culturally grounded images and questions collected across diverse countries, languages, and scripts, making it a useful testbed for both linguistic diversity and cultural coverage.

\paragraph{Afri-MCQA.}
Afri-MCQA~\cite{tonja2026afri} is a multilingual cultural multimodal question-answering benchmark covering about 7.5k image-grounded QA pairs across 15 African languages from 12 countries. The benchmark provides parallel English and native-language question-answer pairs in both text and speech, and is constructed by native speakers.

\paragraph{MMStar.}
MMStar~\cite{chen2024we} is a multimodal evaluation benchmark designed to test whether large vision-language models genuinely use visual information rather than exploiting shortcuts or benchmark artifacts. It emphasizes visually indispensable questions and more reliable evaluation design.

\paragraph{MMMU.}
MMMU~\cite{yue2024mmmu} is a large-scale benchmark for multimodal understanding and reasoning at a college-exam level of difficulty. It covers a wide range of disciplines, subjects, and image types, and is intended to measure advanced perception and domain-specific reasoning.

\paragraph{SEED-Bench-2-Plus.}
SEED-Bench-2-Plus~\cite{li2024seed} is a benchmark for text-rich visual comprehension in multimodal large language models. It focuses on scenarios where substantial textual content appears inside images, requiring models to jointly process visual layout and embedded text.

\section{Other Ablations}\label{app:ablations}
We further ablate the merge scope by applying DiM\textsuperscript{3} to different modules of the shared language model. `Embed\_Only' merges only the token embedding layer, `LLM\_Only' merges only the transformer backbone, and `Lmhead\_Only' merges only the output language modeling head. We also evaluate partial-layer variants that merge `embed\_tokens', a contiguous block of transformer layers, and `lm\_head', namely `0\_7layers', `8\_15layers', `16\_23layers', and `24\_31layers'. As shown in Table~\ref{tab:benchmark_results}, merging only a single module is insufficient, and partial-layer merging is consistently weaker than applying DiM\textsuperscript{3} to the full shared backbone. Among the partial variants, `24\_31layers' is the strongest and remains relatively close to the original LLaVA on general multimodal performance, suggesting that upper layers preserve a substantial portion of task-sensitive multimodal structure. Nevertheless, the full DiM\textsuperscript{3} merge yields the best overall results, which is consistent with the analysis in Section~\ref{sec:analysis}: the gain is not attributable to a single isolated module, but is associated with improved shared semantic alignment across the backbone, with the most pronounced changes occurring in intermediate layers while higher-layer task-sensitive structure is still preserved.

\begin{table*}[htbp]
\centering
\caption{Ablation on merge scope: results of applying DiM\textsuperscript{3} to different modules and layer ranges of the shared language model.}
\label{tab:benchmark_results}
\small
\renewcommand{\arraystretch}{0.95}
\setlength{\tabcolsep}{4pt}
\begin{adjustbox}{max width=\textwidth}
\begin{tabular}{lcccccc}
\toprule
\textbf{Variant} & \textbf{MMStar} & \textbf{MMMU} & \textbf{SEED} & \textbf{Afri-MCQA} & \textbf{MaXM} & \textbf{MaRVL} \\
\specialrule{0.08em}{0.15em}{0.1em}
DiM\textsuperscript{3} & \cellcolor{bestblue}\textbf{34.86} & \cellcolor{bestblue}\textbf{37.22} & 40.45 & \cellcolor{bestblue}\textbf{26.31} & \cellcolor{bestblue}\textbf{33.44} & \cellcolor{bestblue}\textbf{62.91} \\
Embed\_Only & 33.33 & \cellcolor{secondgreen}35.67 & \cellcolor{secondgreen}41.81 & 24.67 & 19.79 & 49.99 \\
LLM\_Only & 32.70 & 28.33 & 35.92 & 19.01 & 16.91 & \cellcolor{secondgreen}54.17 \\
Lmhead\_Only & 20.31 & 25.67 & 24.68 & 0.00 & 2.68 & 49.74 \\
0\_7layers & 10.40 & 27.44 & 12.08 & 1.01 & 2.66 & 49.72 \\
8\_15layers & 16.51 & 25.67 & 14.32 & 0.00 & 2.01 & 50.28 \\
16\_23layers & 28.11 & 30.33 & 35.70 & 0.00 & 1.82 & 50.40 \\
24\_31layers & \cellcolor{secondgreen}34.36 & 35.22 & \cellcolor{bestblue}\textbf{42.03} & \cellcolor{secondgreen}24.74 & \cellcolor{secondgreen}27.13 & 49.89 \\
\bottomrule
\end{tabular}
\end{adjustbox}
\end{table*}

\section{Limitations}\label{app:lim}

Our method currently relies on a reasonably compatible shared backbone between the multilingual source model and the multimodal anchor. When the source checkpoints differ substantially in architecture or tokenizer design, parameter-space composition may become less stable because the two residual updates no longer admit a clean common reference point. Extending training-free merging beyond this shared-backbone setting, especially to models with tokenizer or architectural mismatches, remains an important direction for future work.
\begin{table}[htbp]
\caption{Languages used in our experiments and their corresponding ISO 639-1/2 codes (57 in total).}
\label{tab:language_codes}
\centering

\begin{minipage}[t]{0.3\textwidth}
\centering
\begin{tabular}{>{\bfseries}c l}
\toprule
ISO 639 & Language \\
\midrule
Ak & Akan/Twi \\
Am & Amharic \\
Ar & Arabic \\
Bg & Bulgarian \\
Bn & Bengali \\
Br & Breton \\
De & German \\
El & Greek \\
En & English \\
Es & Spanish \\
Et & Estonian \\
Eu & Basque \\
Fil & Filipino \\
Fr & French \\
Ga & Irish \\
Ha & Hausa  \\
He & Hebrew \\
Hi & Hindi \\
Ht & Haitian \\
\bottomrule
\end{tabular}
\end{minipage}
\hfill
\begin{minipage}[t]{0.3\textwidth}
\centering
\begin{tabular}{>{\bfseries}c l}
\toprule
ISO 639 & Language \\
\midrule
Id & Indonesian \\
Ig & Igbo \\
It & Italian \\
Ja & Japanese \\
Jv & Javanese \\
Ki & Kikuyu   \\
Ko & Korean \\
Ln & Lingala \\
Lug & Luganda \\
Min & Minangkabau \\
Mn & Mongolian \\
Mr & Marathi \\
Ms & Malay \\
My & Burmese \\
No & Norwegian \\
Ny & Chichewa \\
Om & Oromo \\
Pt & Portuguese \\
Qu & Quechua \\
\bottomrule
\end{tabular}
\end{minipage}
\hfill
\begin{minipage}[t]{0.3\textwidth}
\centering
\begin{tabular}{>{\bfseries}c l}
\toprule
ISO 639 & Language \\
\midrule
Ro & Romanian \\
Ru & Russian \\
Rw & Kinyarwanda \\
Si & Sinhala \\
So & Somali \\
Sot & Sesotho \\
Su & Sundanese \\
Sw & Swahili \\
Ta & Tamil \\
Te & Telugu \\
Th & Thai \\
Ti & Tigrinya  \\
Tn & Setswana \\
Tr & Turkish \\
Ur & Urdu \\
Vi & Vietnamese \\
Yo & Yoruba \\
Zh & Chinese \\
Zu & Zulu \\
\bottomrule
\end{tabular}
\end{minipage}
\end{table}

\begin{table*}[t]
\centering
\caption{Results on XCOPA and XStoryCloze across multilingual settings. $\mathrm{Avg}_{\mathrm{mul}}$ denotes the average performance over all languages except English.}
\label{tab:xcopa_xstorycloze_methods}
\small
\renewcommand{\arraystretch}{0.95}
\setlength{\tabcolsep}{4pt}
\begin{adjustbox}{max width=\textwidth}
\begin{tabular}{lcccccccccccc}
\toprule
\multicolumn{13}{c}{\textbf{XCOPA}} \\
\specialrule{0.08em}{0.15em}{0.1em}
\textbf{Methods} & \textbf{Et} & \textbf{Ht} & \textbf{Id} & \textbf{It} & \textbf{Qu} & \textbf{Sw} & \textbf{Ta} & \textbf{Th} & \textbf{Tr} & \textbf{Vi} & \textbf{Zh} & $\mathbf{Avg}_{\mathbf{mul}}$ \\
\midrule
Task Arithmetic & 48.20 & 50.60 & 65.00 & 68.00 & 49.80 & 49.80 & 52.00 & 55.40 & 55.60 & 63.80 & 66.40 & 56.78 \\
DARE            & 48.60 & 50.60 & 63.60 & 67.20 & 50.80 & 50.60 & 52.80 & \cellcolor{bestblue}\textbf{57.40} & 56.00 & 60.20 & 64.20 & 56.55 \\
TIES-Merging    & 48.00 & 49.80 & \cellcolor{secondgreen}67.00 & \cellcolor{secondgreen}68.40 & 49.40 & 50.80 & 52.20 & 56.40 & 56.20 & \cellcolor{secondgreen}66.20 & \cellcolor{secondgreen}67.40 & \cellcolor{secondgreen}57.44 \\
Breadcrumbs     & 48.40 & 50.60 & 65.80 & 66.20 & 51.00 & 51.00 & 51.40 & \cellcolor{secondgreen}57.00 & 55.80 & 65.60 & 64.00 & 56.98 \\
PCB-Merging     & 48.00 & 52.80 & 60.40 & 60.40 & \cellcolor{secondgreen}51.20 & \cellcolor{secondgreen}56.60 & 55.80 & 56.20 & 55.00 & 56.60 & 58.00 & 55.55 \\
STF             & \cellcolor{secondgreen}53.80 & 55.00 & 59.80 & 53.40 & 49.80 & 52.80 & \cellcolor{secondgreen}56.60 & 53.00 & 57.40 & 59.80 & 58.00 & 55.40 \\
NeuroMerging    & 52.40 & \cellcolor{secondgreen}55.20 & \cellcolor{secondgreen}58.20 & 55.60 & 47.40 & 51.40 & 56.40 & 54.60 & \cellcolor{secondgreen}58.20 & 64.00 & 63.60 & 56.09 \\
DiM\textsuperscript{3}            & \cellcolor{bestblue}\textbf{60.60} & \cellcolor{bestblue}\textbf{55.80} & \cellcolor{bestblue}\textbf{73.40} & \cellcolor{bestblue}\textbf{70.80} & \cellcolor{bestblue}\textbf{55.00} & \cellcolor{bestblue}\textbf{61.40} & \cellcolor{bestblue}\textbf{60.80} & 56.60 & \cellcolor{bestblue}\textbf{62.00} & \cellcolor{bestblue}\textbf{70.00} & \cellcolor{bestblue}\textbf{70.80} & \cellcolor{bestblue}\textbf{63.38} \\
\specialrule{0.08em}{0.25em}{0.15em}
\multicolumn{13}{c}{\textbf{XStoryCloze}} \\
\specialrule{0.08em}{0.15em}{0.1em}
\textbf{Methods} & \textbf{Ar} & \textbf{Es} & \textbf{Eu} & \textbf{Hi} & \textbf{Id} & \textbf{My} & \textbf{Ru} & \textbf{Sw} & \textbf{Te} & \textbf{Zh} & \textbf{En} & $\mathbf{Avg}_{\mathbf{mul}}$ \\
\midrule
Task Arithmetic & 53.67 & \cellcolor{secondgreen}70.22 & 51.82 & 56.52 & 64.06 & 48.91 & 67.44 & 51.75 & 55.13 & 64.53 & 80.41 & 58.41 \\
DARE            & 51.62 & 67.70 & 52.22 & 53.34 & 61.42 & 48.91 & 64.92 & 50.50 & 54.20 & 63.27 & 78.95 & 56.81 \\
TIES-Merging    & 54.07 & \cellcolor{bestblue}\textbf{71.08} & 51.56 & 56.32 & \cellcolor{secondgreen}64.86 & 49.11 & \cellcolor{secondgreen}67.97 & 51.03 & 55.06 & \cellcolor{secondgreen}64.59 & \cellcolor{bestblue}\textbf{80.94} & \cellcolor{secondgreen}58.57 \\
Breadcrumbs     & 52.42 & 69.82 & 51.03 & 55.26 & 63.20 & 48.78 & 65.72 & 50.56 & 55.66 & 62.41 & \cellcolor{secondgreen}80.87 & 57.49 \\
PCB-Merging     & 53.21 & 62.41 & 52.08 & 53.34 & 53.34 & 43.94 & 60.42 & 49.50 & 52.08 & 61.48 & 67.31 & 54.18 \\
STF             & \cellcolor{secondgreen}60.23 & 55.06 & \cellcolor{secondgreen}53.54 & \cellcolor{secondgreen}58.90 & 56.12 & \cellcolor{secondgreen}50.03 & 57.31 & \cellcolor{secondgreen}53.54 & \cellcolor{secondgreen}58.44 & 59.36 & 61.15 & 56.25 \\
NeuroMerging    & 58.31 & 57.78 & 53.21 & 58.37 & 58.24 & 48.25 & 60.89 & 52.68 & 55.86 & 61.81 & 65.72 & 56.54 \\
DiM\textsuperscript{3}            & \cellcolor{bestblue}\textbf{64.06} & 69.82 & \cellcolor{bestblue}\textbf{63.80} & \cellcolor{bestblue}\textbf{64.92} & \cellcolor{bestblue}\textbf{68.96} & \cellcolor{bestblue}\textbf{55.20} & \cellcolor{bestblue}\textbf{70.75} & \cellcolor{bestblue}\textbf{62.14} & \cellcolor{bestblue}\textbf{62.67} & \cellcolor{bestblue}\textbf{65.92} & 79.02 & \cellcolor{bestblue}\textbf{64.82} \\
\bottomrule
\end{tabular}
\end{adjustbox}
\end{table*}

\begin{table*}[t]
\centering
\caption{Results on XNLI across 15 languages.}
\label{tab:xnli_methods}
\small
\renewcommand{\arraystretch}{0.95}
\setlength{\tabcolsep}{4pt}
\begin{adjustbox}{max width=\textwidth}
\begin{tabular}{lcccccccccccccccc}
\toprule
\textbf{Methods} & \textbf{Ar} & \textbf{Bg} & \textbf{De} & \textbf{El} & \textbf{Es} & \textbf{Fr} & \textbf{Hi} & \textbf{Ru} & \textbf{Sw} & \textbf{Th} & \textbf{Tr} & \textbf{Ur} & \textbf{Vi} & \textbf{Zh} & \textbf{En} & $\mathbf{Avg}_{\mathbf{mul}}$ \\
\specialrule{0.08em}{0.15em}{0.1em}
Task Arithmetic & 33.53 & 42.01 & 46.10 & 38.76 & 43.78 & 47.71 & 39.56 & 40.32 & 36.31 & \cellcolor{secondgreen}36.02 & 41.00 & 34.02 & \cellcolor{secondgreen}43.41 & \cellcolor{bestblue}\textbf{38.51} & 56.91 & 40.07 \\
DARE            & 33.69 & 43.29 & 42.49 & 37.91 & \cellcolor{secondgreen}45.22 & 45.42 & 38.55 & 40.76 & 34.82 & 34.46 & 38.07 & 33.57 & 41.73 & 37.91 & 56.63 & 39.14 \\
TIES-Merging    & 33.49 & 42.49 & 47.63 & 39.72 & \cellcolor{bestblue}\textbf{45.82} & 48.88 & 41.33 & 41.24 & 36.47 & 35.82 & 41.29 & 34.50 & 41.16 & 36.79 & \cellcolor{secondgreen}58.51 & \cellcolor{secondgreen}40.47 \\
Breadcrumbs     & \cellcolor{secondgreen}33.78 & \cellcolor{secondgreen}43.61 & \cellcolor{bestblue}\textbf{47.75} & 39.28 & \cellcolor{secondgreen}45.22 & \cellcolor{secondgreen}49.08 & 40.28 & \cellcolor{secondgreen}43.09 & 36.22 & 35.74 & 40.80 & 33.53 & 39.20 & \cellcolor{secondgreen}38.27 & \cellcolor{bestblue}\textbf{58.67} & 40.42 \\
PCB-Merging     & 33.13 & 41.73 & 44.02 & 33.41 & 41.12 & 45.10 & 33.33 & 40.44 & \cellcolor{secondgreen}37.35 & 33.49 & 34.38 & 33.45 & 35.90 & 33.17 & 40.24 & 37.14 \\
STF             & 33.49 & \cellcolor{bestblue}\textbf{46.10} & 42.77 & 40.24 & 40.84 & \cellcolor{bestblue}\textbf{51.41} & 38.07 & 36.71 & 33.33 & 33.65 & \cellcolor{bestblue}\textbf{43.57} & 38.31 & 37.51 & 31.93 & 38.55 & 39.14 \\
NeuroMerging    & 33.61 & 41.29 & \cellcolor{secondgreen}44.10 & 41.04 & 43.57 & 47.71 & \cellcolor{secondgreen}42.81 & 39.60 & 34.02 & 35.50 & 40.84 & \cellcolor{secondgreen}43.29 & 38.27 & 34.46 & 41.04 & 40.01 \\
DiM\textsuperscript{3}            & \cellcolor{bestblue}\textbf{35.10} & 43.25 & \cellcolor{secondgreen}47.71 & \cellcolor{bestblue}\textbf{43.17} & 44.18 & 45.82 & \cellcolor{bestblue}\textbf{45.94} & \cellcolor{bestblue}\textbf{44.02} & \cellcolor{bestblue}\textbf{43.57} & \cellcolor{bestblue}\textbf{39.24} & \cellcolor{secondgreen}42.37 & \cellcolor{bestblue}\textbf{43.90} & \cellcolor{bestblue}\textbf{44.62} & 38.11 & 50.68 & \cellcolor{bestblue}\textbf{42.93} \\
\bottomrule
\end{tabular}
\end{adjustbox}
\end{table*}

\begin{table*}[t]
\centering
\caption{Results on XCOPA and XStoryCloze across multilingual settings.}
\label{tab:xcopa_xstorycloze_models}
\small
\renewcommand{\arraystretch}{0.95}
\setlength{\tabcolsep}{4pt}
\begin{adjustbox}{max width=\textwidth}
\begin{tabular}{lcccccccccccc}
\toprule
\multicolumn{13}{c}{\textbf{XCOPA}} \\
\specialrule{0.08em}{0.15em}{0.1em}
\textbf{Models} & \textbf{Et} & \textbf{Ht} & \textbf{Id} & \textbf{It} & \textbf{Qu} & \textbf{Sw} & \textbf{Ta} & \textbf{Th} & \textbf{Tr} & \textbf{Vi} & \textbf{Zh} & $\mathbf{Avg}_{\mathbf{mul}}$ \\
\midrule
Llama\_2\_7B   & 49.00 & 50.40 & 62.00 & 62.80 & 50.80 & 52.20 & 53.60 & \cellcolor{secondgreen}{56.40} & 54.80 & 63.40 & 63.60 & 56.27 \\
EMMA-500       & \cellcolor{bestblue}\textbf{61.40} & \cellcolor{bestblue}\textbf{57.00} & \cellcolor{bestblue}\textbf{73.60} & \cellcolor{secondgreen}{65.40} & 50.00 & \cellcolor{bestblue}\textbf{64.80} & \cellcolor{secondgreen}{60.60} & 56.20 & \cellcolor{secondgreen}{61.00} & \cellcolor{bestblue}\textbf{70.20} & \cellcolor{secondgreen}{67.00} & \cellcolor{secondgreen}{62.47} \\
LLaVA-v1.5-7B  & 49.40 & 51.60 & 50.80 & 50.00 & \cellcolor{secondgreen}{51.20} & 52.00 & 53.80 & 54.60 & 54.20 & 51.00 & 53.20 & 51.98 \\
DiM\textsuperscript{3}           & \cellcolor{secondgreen}{60.60} & \cellcolor{secondgreen}{55.80} & \cellcolor{secondgreen}{73.40} & \cellcolor{bestblue}\textbf{70.80} & \cellcolor{bestblue}\textbf{55.00} & \cellcolor{secondgreen}{61.40} & \cellcolor{bestblue}\textbf{60.80} & \cellcolor{bestblue}\textbf{56.60} & \cellcolor{bestblue}\textbf{62.00} & \cellcolor{secondgreen}{70.00} & \cellcolor{bestblue}\textbf{70.80} & \cellcolor{bestblue}\textbf{63.38} \\
\specialrule{0.08em}{0.25em}{0.15em}
\multicolumn{13}{c}{\textbf{XStoryCloze}} \\
\specialrule{0.08em}{0.15em}{0.1em}
\textbf{Models} & \textbf{Ar} & \textbf{Es} & \textbf{Eu} & \textbf{Hi} & \textbf{Id} & \textbf{My} & \textbf{Ru} & \textbf{Sw} & \textbf{Te} & \textbf{Zh} & \textbf{En} & $\mathbf{Avg}_{\mathbf{mul}}$ \\
\midrule
Llama\_2\_7B   & 50.17 & 67.70 & 50.56 & 53.54 & 59.36 & 48.51 & 63.34 & 50.43 & 54.33 & 59.50 & \cellcolor{secondgreen}{77.10} & 55.74 \\
EMMA-500       & \cellcolor{bestblue}\textbf{65.25} & \cellcolor{bestblue}\textbf{69.89} & \cellcolor{bestblue}\textbf{64.39} & \cellcolor{secondgreen}{63.80} & \cellcolor{bestblue}\textbf{69.03} & \cellcolor{bestblue}\textbf{58.44} & \cellcolor{secondgreen}{67.84} & \cellcolor{bestblue}\textbf{63.34} & \cellcolor{bestblue}\textbf{64.00} & \cellcolor{secondgreen}{61.75} & 75.65 & \cellcolor{secondgreen}{64.77} \\
LLaVA-v1.5-7B  & 47.98 & 51.89 & 50.23 & 47.58 & 48.64 & 47.32 & 50.17 & 48.58 & 53.67 & 48.78 & 66.71 & 49.48 \\
DiM\textsuperscript{3}           & \cellcolor{secondgreen}{64.06} & \cellcolor{secondgreen}{69.82} & \cellcolor{secondgreen}{63.80} & \cellcolor{bestblue}\textbf{64.92} & \cellcolor{secondgreen}{68.96} & \cellcolor{secondgreen}{55.20} & \cellcolor{bestblue}\textbf{70.75} & \cellcolor{secondgreen}{62.14} & \cellcolor{secondgreen}{62.67} & \cellcolor{bestblue}\textbf{65.92} & \cellcolor{bestblue}\textbf{79.02} & \cellcolor{bestblue}\textbf{64.82} \\
\bottomrule
\end{tabular}
\end{adjustbox}
\vspace{-0.8em}
\end{table*}

\begin{table*}[t]
\centering
\caption{Results on XNLI across multilingual settings.}
\label{tab:xnli_models}
\small
\renewcommand{\arraystretch}{0.95}
\setlength{\tabcolsep}{4pt}
\begin{adjustbox}{max width=\textwidth}
\begin{tabular}{lcccccccccccccccc}
\toprule
\textbf{Models} & \textbf{Ar} & \textbf{Bg} & \textbf{De} & \textbf{El} & \textbf{Es} & \textbf{Fr} & \textbf{Hi} & \textbf{Ru} & \textbf{Sw} & \textbf{Th} & \textbf{Tr} & \textbf{Ur} & \textbf{Vi} & \textbf{Zh} & \textbf{En} & $\mathbf{Avg}_{\mathbf{mul}}$ \\
\specialrule{0.08em}{0.15em}{0.1em}
Llama-2-7B   & 34.22 & 42.13 & \cellcolor{bestblue}\textbf{48.11} & 36.59 & 42.45 & \cellcolor{bestblue}\textbf{50.28} & 38.71 & 39.88 & 35.02 & 36.27 & 36.87 & 33.90 & 38.07 & 34.46 & \cellcolor{bestblue}\textbf{55.66} & 39.07 \\
EMMA-500     & \cellcolor{secondgreen}{34.78} & \cellcolor{bestblue}\textbf{46.27} & 47.07 & \cellcolor{bestblue}\textbf{45.86} & \cellcolor{bestblue}\textbf{46.75} & 34.78 & \cellcolor{bestblue}\textbf{47.59} & \cellcolor{bestblue}\textbf{46.55} & \cellcolor{bestblue}\textbf{46.18} & \cellcolor{bestblue}\textbf{41.97} & \cellcolor{bestblue}\textbf{44.86} & \cellcolor{bestblue}\textbf{45.98} & \cellcolor{bestblue}\textbf{47.03} & 35.22 & \cellcolor{secondgreen}{53.78} & \cellcolor{bestblue}\textbf{43.64} \\
LLaVA-v1.5-7B& 33.57 & 33.45 & 38.71 & 36.22 & 37.55 & 37.87 & 34.98 & 35.30 & 33.65 & 34.06 & 35.82 & 34.58 & 36.06 & \cellcolor{secondgreen}{35.86} & 52.81 & 35.55 \\
DiM\textsuperscript{3}         & \cellcolor{bestblue}\textbf{35.10} & \cellcolor{secondgreen}{43.25} & \cellcolor{secondgreen}{47.71} & \cellcolor{secondgreen}{43.17} & \cellcolor{secondgreen}{44.18} & \cellcolor{secondgreen}{45.82} & \cellcolor{secondgreen}{45.94} & \cellcolor{secondgreen}{44.02} & \cellcolor{secondgreen}{43.57} & \cellcolor{secondgreen}{39.24} & \cellcolor{secondgreen}{42.37} & \cellcolor{secondgreen}{43.90} & \cellcolor{secondgreen}{44.62} & \cellcolor{bestblue}\textbf{38.11} & 50.68 & \cellcolor{secondgreen}{42.93} \\
\bottomrule
\end{tabular}
\end{adjustbox}
\end{table*}

\begin{table*}[t]
\centering
\caption{Results on xMMMU and MaXM across multilingual settings.}
\label{tab:xmmmu_maxm_combined}
\small
\setlength{\tabcolsep}{3pt}
\begin{adjustbox}{max width=\textwidth}
\begin{tabular}{lcccccccc c cccccccc}
\toprule
& \multicolumn{8}{c}{\textbf{xMMMU}} & & \multicolumn{8}{c}{\textbf{MaXM}} \\
\cmidrule(lr){2-9}\cmidrule(lr){11-18}
Models 
& Ar & Fr & Hi & Id & Ja & Pt & En & $\mathrm{Avg}_{\mathrm{mul}}$
& 
& Fr & Hi & He & Ro & Th & Zh & En & $\mathrm{Avg}_{\mathrm{mul}}$ \\
\midrule
LLaVA-v1.5-7B
& 28.90 & 34.90 & 27.80 & 32.00 & 31.60 & 33.70 & \cellcolor{secondgreen}{35.70} & 31.48
&
& 32.20 & 16.92 & 12.50 & 15.14 & 16.04 & 27.80 & 49.81 & 20.10 \\
Task Arithmetic
& 30.20 & \cellcolor{secondgreen}{36.60} & 27.10 & 33.00 & 32.70 & \cellcolor{secondgreen}{36.40} & \cellcolor{bestblue}\textbf{35.90} & 32.67
&
& 33.33 & 21.15 & 12.50 & 16.90 & 18.66 & \cellcolor{secondgreen}{31.05} & 49.42 & 22.27 \\
DARE
& 28.90 & 34.90 & 27.50 & 31.60 & \cellcolor{secondgreen}{33.10} & 32.30 & 35.60 & 31.38
&
& 34.47 & 17.69 & 12.50 & 11.27 & 16.04 & 25.27 & 49.81 & 19.54 \\
TIES-Merging
& 28.50 & 32.20 & 26.80 & 33.00 & \cellcolor{bestblue}\textbf{35.70} & \cellcolor{bestblue}\textbf{37.40} & 34.70 & 32.27
&
& 33.71 & 26.92 & 21.79 & 16.90 & 17.54 & 28.52 & 48.25 & 24.23 \\
Breadcrumbs
& 27.20 & 32.60 & 25.80 & 30.30 & 30.90 & 32.70 & 31.90 & 29.92
&
& 31.44 & 24.62 & 19.64 & 19.01 & 10.82 & 25.99 & 48.64 & 21.92 \\
PCB-Merging
& 21.50 & 31.50 & 30.20 & 28.30 & 27.90 & 28.30 & 24.70 & 27.95
&
& 0.38 & 0.00 & 0.00 & 0.00 & 0.00 & 0.00 & 12.45 & 0.06 \\
STF
& \cellcolor{secondgreen}{31.50} & 34.20 & \cellcolor{secondgreen}{30.90} & \cellcolor{secondgreen}{34.00} & 29.00 & 32.70 & 31.00 & 32.05
&
& 10.61 & 10.00 & 11.79 & 7.75 & 11.94 & 14.44 & 18.29 & 11.09 \\
NeuroMerging
& 28.50 & \cellcolor{bestblue}\textbf{37.20} & 30.60 & \cellcolor{bestblue}\textbf{36.70} & \cellcolor{secondgreen}{33.10} & 34.70 & 33.80 & \cellcolor{secondgreen}{33.47}
&
& 25.00 & 14.23 & \cellcolor{secondgreen}{26.79} & 9.15 & \cellcolor{secondgreen}{22.76} & 25.27 & 30.74 & 20.53 \\
Palo-7B
& 30.90 & 33.20 & 28.90 & \cellcolor{secondgreen}{34.00} & 27.10 & 33.70 & 31.70 & 31.30
&
& 33.71 & 16.54 & 12.14 & 11.27 & 14.55 & 10.47 & \cellcolor{bestblue}\textbf{51.36} & 16.45 \\
Palo-13B
& 29.50 & 33.90 & \cellcolor{bestblue}\textbf{32.00} & 32.30 & 26.80 & 33.30 & 33.80 & 31.30
&
& \cellcolor{bestblue}\textbf{40.91} & \cellcolor{bestblue}\textbf{39.62} & 12.86 & \cellcolor{secondgreen}{33.45} & 14.55 & 30.69 & \cellcolor{bestblue}\textbf{51.36} & \cellcolor{secondgreen}{28.68} \\
PLAST-7B
& - & - & - & - & - & - & - & -
&
& - & 10.70 & 10.20 & \cellcolor{bestblue}\textbf{36.70} & 20.8 & - & - & - \\
DiM\textsuperscript{3}-7B
& \cellcolor{bestblue}\textbf{34.60} & \cellcolor{secondgreen}{36.60} & 29.20 & 32.30 & \cellcolor{bestblue}\textbf{35.70} & 34.70 & 32.60 & \cellcolor{bestblue}\textbf{33.85}
&
& \cellcolor{secondgreen}{35.98} & \cellcolor{secondgreen}{35.77} & \cellcolor{bestblue}\textbf{40.71} & 14.79 & \cellcolor{bestblue}\textbf{36.19} & \cellcolor{bestblue}\textbf{37.18} & \cellcolor{secondgreen}{50.58} & \cellcolor{bestblue}\textbf{33.44} \\
\bottomrule
\end{tabular}
\end{adjustbox}
\end{table*}

\begin{table*}[t]
\centering
\caption{Results on MaRVL and XGQA across multilingual settings.}
\label{tab:marvl_xgqa}
\small
\setlength{\tabcolsep}{3pt}
\begin{adjustbox}{max width=\textwidth}
\begin{tabular}{lccccccc c ccccccccc}
\toprule
& \multicolumn{7}{c}{\textbf{MaRVL}} & & \multicolumn{9}{c}{\textbf{XGQA}} \\
\cmidrule(lr){2-8} \cmidrule(lr){10-18}
Models
& Id & Sw & Ta & Tr & Zh & En & $\mathrm{Avg}_{\mathrm{mul}}$
&
& Bn & De & Id & Ko & Pt & Ru & Zh & En & $\mathrm{Avg}_{\mathrm{mul}}$ \\
\midrule
LLaVA-v1.5-7B
& 50.09 & 49.37 & 49.36 & 50.00 & 50.10 & 51.22 & 49.78
&
& 15.48 & 28.50 & 33.17 & 38.00 & 27.53 & 32.97 & 38.67 & \cellcolor{bestblue}\textbf{61.89} & 30.62 \\
Task Arithmetic
& 50.00 & 49.46 & 49.60 & 50.08 & 49.90 & 51.12 & 49.81
&
& 16.88 & 38.03 & 37.33 & \cellcolor{secondgreen}{41.93} & 34.42 & \cellcolor{secondgreen}{42.01} & 45.20 & \cellcolor{secondgreen}{61.50} & 36.54 \\
DARE
& 50.00 & 49.37 & 49.60 & 49.83 & 50.00 & 50.92 & 49.76
&
& 14.78 & 32.96 & 34.23 & 34.07 & 30.85 & 36.47 & 37.19 & 60.16 & 31.51 \\
TIES-Merging
& 50.00 & 49.37 & 49.28 & 50.00 & 49.90 & 50.99 & 49.71
&
& 18.54 & 38.21 & 38.24 & 40.59 & 35.71 & 41.85 & 43.32 & 58.37 & 36.64 \\
Breadcrumbs
& 50.00 & 49.37 & 49.52 & 50.00 & 49.31 & 51.13     & 49.64
&
& 15.66 & \cellcolor{secondgreen}{41.30} & \cellcolor{secondgreen}{38.26} & 37.01 & \cellcolor{secondgreen}{38.39} & 40.80 & 40.78 & 53.27 & 36.03 \\
STF
& 51.33 & \cellcolor{secondgreen}{50.27} & \cellcolor{secondgreen}{51.05} & 50.59 & \cellcolor{secondgreen}{51.09} & 52.49 & 50.87
&
& 0.91 & 16.96 & 11.11 & 9.42 & 10.40 & 11.24 & 16.06 & 20.56 & 10.87 \\
NeuroMerging
& 49.91 & 49.28 & 49.84 & 50.17 & 50.00 & 50.93 & 49.84
&
& 2.44 & 28.56 & 23.34 & 17.60 & 24.56 & 26.62 & 23.74 & 36.19 & 20.98 \\
Palo-7B
& \cellcolor{secondgreen}{51.95} & 49.55 & 49.44 & \cellcolor{secondgreen}{52.29} & 50.40 & \cellcolor{bestblue}\textbf{55.39} & \cellcolor{secondgreen}{50.73}
&
& \cellcolor{bestblue}\textbf{41.92} & 38.97 & 36.67 & 41.28 & 31.86 & 21.84 & \cellcolor{secondgreen}{45.94} & 60.25 & \cellcolor{secondgreen}{36.93} \\
Palo-13B
& 50.00 & 49.37 & 49.60 & 49.75 & 49.80 & 51.10 & 49.70
&
& 31.91 & 21.22 & 28.28 & 29.39 & 26.75 & 32.85 & 43.78 & 59.08 & 30.60 \\
DiM\textsuperscript{3}-7B
& \cellcolor{bestblue}\textbf{64.80} & \cellcolor{bestblue}\textbf{79.42} & \cellcolor{bestblue}\textbf{54.75} & \cellcolor{bestblue}\textbf{54.24} & \cellcolor{bestblue}\textbf{61.36} & \cellcolor{secondgreen}{54.05} & \cellcolor{bestblue}\textbf{62.91}
&
& \cellcolor{secondgreen}{38.96} & \cellcolor{bestblue}\textbf{51.98} & \cellcolor{bestblue}\textbf{49.21} & \cellcolor{bestblue}\textbf{48.00} & \cellcolor{bestblue}\textbf{49.80} & \cellcolor{bestblue}\textbf{47.93} & \cellcolor{bestblue}\textbf{50.42} & 59.27 & \cellcolor{bestblue}\textbf{48.04} \\
\bottomrule
\end{tabular}
\end{adjustbox}
\end{table*}

\begin{table*}[t]
\centering
\scriptsize
\setlength{\tabcolsep}{3pt}
\renewcommand{\arraystretch}{1.08}
\caption{Results on Afri-MCQA across multilingual settings.}
\label{tab:multilingual_results}
\resizebox{\textwidth}{!}{
\begin{tabular}{lccccccccccccccccc}
\toprule
\textbf{Method} & \textbf{Am} & \textbf{Ha} & \textbf{Ig} & \textbf{Lug} & \textbf{Om} & \textbf{Rw} & \textbf{Ki} & \textbf{So} & \textbf{Ti} & \textbf{Ak} & \textbf{Yo} & \textbf{Tn} & \textbf{Ny} & \textbf{Zu} & \textbf{Sot} & \textbf{Ln} & \textbf{Avg} \\
\midrule
\multicolumn{18}{c}{\textit{Llama-2-7B}} \\
\midrule
LLaVA-v1.5-7B & 0.216 & \cellcolor{bestblue}\textbf{0.256} & 0.256 & 0.265 & 0.225 & 0.295 & 0.237 & \cellcolor{secondgreen}0.270 & \cellcolor{secondgreen}0.199 & 0.276 & \cellcolor{bestblue}\textbf{0.235} & \cellcolor{secondgreen}0.255 & 0.200 & \cellcolor{bestblue}\textbf{0.260} & \cellcolor{secondgreen}0.254 & 0.237 & \cellcolor{secondgreen}0.246 \\
Task Arithmetic & 0.216 & \cellcolor{bestblue}\textbf{0.256} & 0.251 & 0.265 & 0.225 & 0.290 & 0.237 & \cellcolor{secondgreen}0.270 & \cellcolor{secondgreen}0.199 & 0.271 & \cellcolor{bestblue}\textbf{0.235} & 0.245 & 0.200 & \cellcolor{secondgreen}0.250 & \cellcolor{secondgreen}0.254 & 0.237 & 0.244 \\
DARE & 0.216 & \cellcolor{secondgreen}0.251 & 0.246 & 0.260 & 0.225 & 0.295 & 0.237 & \cellcolor{secondgreen}0.270 & \cellcolor{secondgreen}0.199 & 0.276 & \cellcolor{bestblue}\textbf{0.235} & \cellcolor{bestblue}\textbf{0.260} & 0.195 & 0.240 & \cellcolor{secondgreen}0.254 & 0.232 & 0.243 \\
TIES-Merging & 0.216 & \cellcolor{secondgreen}0.251 & 0.241 & 0.265 & 0.215 & 0.275 & 0.227 & 0.260 & \cellcolor{secondgreen}0.199 & 0.271 & \cellcolor{bestblue}\textbf{0.235} & 0.245 & 0.195 & 0.240 & 0.246 & 0.232 & 0.238 \\
Breadcrumbs & 0.060 & 0.246 & 0.236 & 0.250 & 0.210 & 0.265 & 0.232 & 0.260 & 0.046 & 0.229 & 0.215 & 0.240 & 0.180 & 0.229 & 0.223 & 0.237 & 0.210 \\
PCB-Merging & 0.000 & 0.000 & 0.000 & 0.000 & 0.000 & 0.000 & 0.000 & 0.000 & 0.000 & 0.000 & 0.000 & 0.000 & 0.000 & 0.000 & 0.000 & 0.000 & 0.000 \\
STF & \cellcolor{bestblue}\textbf{0.251} & 0.246 & 0.221 & 0.280 & \cellcolor{secondgreen}0.235 & \cellcolor{secondgreen}0.300 & 0.242 & 0.235 & \cellcolor{secondgreen}0.199 & 0.271 & 0.170 & \cellcolor{bestblue}\textbf{0.260} & \cellcolor{secondgreen}0.205 & 0.208 & 0.246 & 0.248 & 0.239 \\
NeuroMerging & 0.166 & 0.246 & \cellcolor{secondgreen}0.266 & \cellcolor{secondgreen}0.295 & 0.215 & \cellcolor{secondgreen}0.300 & \cellcolor{bestblue}\textbf{0.253} & \cellcolor{secondgreen}0.270 & 0.167 & 0.257 & 0.210 & 0.235 & 0.200 & 0.208 & 0.246 & \cellcolor{secondgreen}0.253 & 0.237 \\
Palo-7B & 0.216 & \cellcolor{secondgreen}0.251 & 0.261 & 0.255 & 0.225 & 0.290 & \cellcolor{secondgreen}0.247 & \cellcolor{secondgreen}0.270 & \cellcolor{secondgreen}0.199 & \cellcolor{secondgreen}0.280 & 0.225 & 0.250 & \cellcolor{secondgreen}0.205 & \cellcolor{bestblue}\textbf{0.260} & 0.246 & 0.232 & 0.245 \\
Palo-13B & 0.216 & \cellcolor{bestblue}\textbf{0.256} & 0.246 & 0.260 & 0.220 & 0.270 & 0.242 & 0.260 & \cellcolor{secondgreen}0.199 & 0.262 & \cellcolor{secondgreen}0.230 & 0.250 & 0.195 & 0.240 & \cellcolor{secondgreen}0.254 & 0.232 & 0.240 \\
DiM\textsuperscript{3}-7B & \cellcolor{secondgreen}0.241 & \cellcolor{bestblue}\textbf{0.256} & \cellcolor{bestblue}\textbf{0.281} & \cellcolor{bestblue}\textbf{0.320} & \cellcolor{bestblue}\textbf{0.245} & \cellcolor{bestblue}\textbf{0.335} & 0.232 & \cellcolor{bestblue}\textbf{0.320} & \cellcolor{bestblue}\textbf{0.204} & \cellcolor{bestblue}\textbf{0.304} & \cellcolor{bestblue}\textbf{0.235} & 0.250 & \cellcolor{bestblue}\textbf{0.215} & 0.240 & \cellcolor{bestblue}\textbf{0.269} & \cellcolor{bestblue}\textbf{0.263} & \cellcolor{bestblue}\textbf{0.263} \\
\midrule
\multicolumn{18}{c}{\textit{Qwen2-7B-Instruct}} \\
\midrule
Pangea-7B & \cellcolor{secondgreen}0.357 & 0.271 & 0.352 & 0.320 & 0.130 & 0.370 & 0.253 & 0.310 & 0.287 & 0.332 & 0.380 & 0.185 & 0.265 & 0.198 & 0.131 & 0.389 & 0.283 \\
Task Arithmetic & 0.332 & 0.312 & 0.357 & \cellcolor{bestblue}\textbf{0.425} & 0.240 & \cellcolor{secondgreen}0.475 & 0.345 & 0.340 & 0.306 & \cellcolor{secondgreen}0.360 & 0.390 & 0.265 & 0.345 & \cellcolor{bestblue}\textbf{0.333} & 0.285 & 0.424 & 0.346 \\
DARE & \cellcolor{bestblue}\textbf{0.367} & 0.292 & 0.322 & 0.350 & 0.225 & 0.430 & 0.335 & 0.320 & 0.301 & 0.322 & 0.320 & 0.265 & 0.340 & 0.271 & 0.231 & 0.404 & 0.318 \\
TIES-Merging & 0.337 & \cellcolor{secondgreen}0.327 & 0.357 & \cellcolor{secondgreen}0.415 & 0.275 & \cellcolor{bestblue}\textbf{0.480} & 0.366 & 0.335 & 0.301 & 0.351 & \cellcolor{secondgreen}0.395 & 0.315 & 0.385 & \cellcolor{bestblue}\textbf{0.333} & 0.292 & \cellcolor{bestblue}\textbf{0.455} & 0.357 \\
Breadcrumbs & 0.327 & 0.276 & 0.332 & 0.395 & 0.295 & 0.455 & 0.351 & \cellcolor{secondgreen}0.345 & 0.324 & 0.346 & \cellcolor{bestblue}\textbf{0.440} & \cellcolor{secondgreen}0.335 & \cellcolor{secondgreen}0.395 & 0.302 & \cellcolor{secondgreen}0.315 & 0.414 & 0.353 \\
PCB-Merging & 0.337 & 0.271 & 0.327 & 0.375 & \cellcolor{bestblue}\textbf{0.325} & 0.455 & \cellcolor{secondgreen}0.371 & \cellcolor{secondgreen}0.345 & 0.333 & 0.346 & 0.390 & 0.315 & 0.385 & \cellcolor{secondgreen}0.323 & 0.277 & \cellcolor{secondgreen}0.429 & 0.350 \\
STF & \cellcolor{bestblue}\textbf{0.367} & 0.312 & \cellcolor{secondgreen}0.387 & 0.380 & 0.260 & 0.400 & 0.304 & 0.325 & 0.333 & 0.318 & 0.350 & 0.265 & 0.275 & 0.292 & 0.208 & 0.394 & 0.323 \\
NeuroMerging & 0.307 & 0.281 & \cellcolor{bestblue}\textbf{0.407} & \cellcolor{secondgreen}0.415 & 0.305 & 0.455 & 0.340 & \cellcolor{bestblue}\textbf{0.380} & \cellcolor{secondgreen}0.343 & 0.346 & 0.380 & \cellcolor{bestblue}\textbf{0.355} & \cellcolor{bestblue}\textbf{0.400} & \cellcolor{bestblue}\textbf{0.333} & 0.308 & 0.424 & \cellcolor{secondgreen}0.361 \\
DiM\textsuperscript{3} & 0.317 & \cellcolor{bestblue}\textbf{0.342} & \cellcolor{bestblue}\textbf{0.407} & \cellcolor{bestblue}\textbf{0.425} & \cellcolor{secondgreen}0.320 & \cellcolor{bestblue}\textbf{0.480} & \cellcolor{bestblue}\textbf{0.376} & \cellcolor{secondgreen}0.345 & \cellcolor{bestblue}\textbf{0.347} & \cellcolor{bestblue}\textbf{0.388} & 0.390 & 0.315 & 0.390 & \cellcolor{bestblue}\textbf{0.333} & \cellcolor{bestblue}\textbf{0.331} & 0.419 & \cellcolor{bestblue}\textbf{0.370} \\
\midrule
\multicolumn{18}{c}{\textit{Qwen3-8B}} \\
\midrule
Qwen3-VL-8B-Instruct & 0.377 & 0.342 & 0.347 & 0.415 & 0.335 & 0.460 & \cellcolor{secondgreen}0.392 & 0.405 & 0.357 & \cellcolor{bestblue}\textbf{0.430} & 0.420 & \cellcolor{secondgreen}0.350 & 0.380 & 0.302 & \cellcolor{secondgreen}0.323 & \cellcolor{secondgreen}0.490 & 0.383 \\
Task Arithmetic & 0.402 & 0.337 & 0.352 & \cellcolor{secondgreen}0.420 & 0.340 & \cellcolor{secondgreen}0.495 & 0.381 & 0.435 & 0.357 & 0.383 & 0.415 & 0.335 & 0.385 & 0.281 & 0.308 & \cellcolor{bestblue}\textbf{0.505} & 0.383 \\
DARE & 0.231 & 0.181 & 0.236 & 0.220 & 0.245 & 0.190 & 0.289 & 0.250 & 0.273 & 0.224 & 0.265 & 0.225 & 0.280 & 0.250 & 0.300 & 0.248 & 0.244 \\
TIES-Merging & 0.382 & 0.342 & \cellcolor{bestblue}\textbf{0.372} & 0.410 & 0.390 & 0.490 & \cellcolor{bestblue}\textbf{0.407} & 0.425 & 0.366 & 0.383 & 0.415 & 0.335 & 0.395 & 0.292 & 0.269 & \cellcolor{bestblue}\textbf{0.505} & 0.386 \\
Breadcrumbs & 0.010 & 0.015 & 0.025 & 0.000 & 0.000 & 0.015 & 0.000 & 0.000 & 0.028 & 0.000 & 0.000 & 0.005 & 0.025 & 0.000 & 0.000 & 0.020 & 0.009 \\
PCB-Merging & 0.402 & 0.342 & 0.307 & 0.395 & 0.370 & 0.475 & 0.366 & 0.430 & 0.426 & 0.374 & 0.425 & 0.320 & 0.370 & 0.313 & \cellcolor{bestblue}\textbf{0.346} & 0.404 & 0.379 \\
STF & \cellcolor{secondgreen}0.427 & \cellcolor{secondgreen}0.442 & \cellcolor{secondgreen}0.367 & 0.400 & \cellcolor{secondgreen}0.415 & \cellcolor{bestblue}\textbf{0.645} & \cellcolor{bestblue}\textbf{0.407} & \cellcolor{secondgreen}0.530 & \cellcolor{secondgreen}0.458 & 0.336 & \cellcolor{bestblue}\textbf{0.475} & 0.340 & \cellcolor{secondgreen}0.440 & \cellcolor{secondgreen}0.354 & 0.262 & 0.404 & \cellcolor{secondgreen}0.419 \\
NeuroMerging & 0.211 & 0.292 & 0.226 & 0.240 & 0.275 & 0.380 & 0.191 & 0.385 & 0.273 & 0.215 & 0.335 & 0.235 & 0.235 & 0.281 & 0.200 & 0.187 & 0.260 \\
DiM\textsuperscript{3} & \cellcolor{bestblue}\textbf{0.493} & \cellcolor{bestblue}\textbf{0.498} & 0.362 & \cellcolor{bestblue}\textbf{0.450} & \cellcolor{bestblue}\textbf{0.445} & \cellcolor{bestblue}\textbf{0.645} & \cellcolor{bestblue}\textbf{0.407} & \cellcolor{bestblue}\textbf{0.555} & \cellcolor{bestblue}\textbf{0.505} & \cellcolor{secondgreen}0.388 & \cellcolor{secondgreen}0.445 & \cellcolor{bestblue}\textbf{0.420} & \cellcolor{bestblue}\textbf{0.480} & \cellcolor{bestblue}\textbf{0.406} & 0.277 & 0.460 & \cellcolor{bestblue}\textbf{0.452} \\
\bottomrule
\end{tabular}
}
\end{table*}

\begin{table*}[t]
\centering
\caption{Multilingual performance comparison of different merging methods on CVQA.}
\label{tab:cvqa}
\resizebox{\textwidth}{!}{
\begin{tabular}{lccccccccccc}
\toprule
\textbf{Language} & \textbf{LLaVA-v1.5-7B} & \textbf{Task Arithmetic} & \textbf{DARE} & \textbf{TIES-Merging} & \textbf{Breadcrumbs} & \textbf{PCB-Merging} & \textbf{STF} & \textbf{NeuroMerging} & \textbf{Palo-7B} & \textbf{Palo-13B} & \textbf{DiM\textsuperscript{3}} \\
\midrule
Am  & 26.50 & 29.06 & 29.91 & 27.78 & 25.21 & 0.43  & \cellcolor{bestblue}\textbf{35.90} & 30.34 & 19.23 & 26.07 & \cellcolor{secondgreen}33.76 \\
Bn  & 29.37 & 31.47 & 31.47 & 32.87 & 29.02 & 1.05  & 23.43 & 25.52 & \cellcolor{bestblue}\textbf{37.76} & \cellcolor{secondgreen}36.01 & 32.52 \\
Br  & 29.63 & 31.36 & 31.11 & 29.38 & 27.16 & 20.74 & 23.95 & \cellcolor{bestblue}\textbf{33.58} & 29.14 & \cellcolor{secondgreen}33.33 & 30.86 \\
Bg  & 39.35 & 39.08 & 40.43 & 37.74 & 34.23 & 11.05 & 34.77 & \cellcolor{bestblue}\textbf{42.05} & 36.66 & \cellcolor{secondgreen}40.97 & 39.89 \\
Zh  & \cellcolor{secondgreen}48.57 & \cellcolor{bestblue}\textbf{48.95} & 45.89 & 47.80 & 41.87 & 0.19  & 44.17 & 47.23 & 45.51 & 46.65 & 47.80 \\
Ar  & 32.51 & \cellcolor{secondgreen}35.96 & 34.98 & 33.00 & 30.05 & 2.46  & 32.51 & 33.99 & 29.06 & 27.59 & \cellcolor{bestblue}\textbf{40.39} \\
Fil & 46.31 & \cellcolor{secondgreen}46.80 & 45.32 & 45.32 & 37.44 & 5.91  & 37.44 & \cellcolor{bestblue}\textbf{47.29} & 38.92 & \cellcolor{secondgreen}46.80 & \cellcolor{bestblue}\textbf{47.29} \\
Hi  & 39.80 & 37.31 & 38.81 & 39.80 & 31.34 & 1.49  & 45.27 & \cellcolor{bestblue}\textbf{51.74} & 42.79 & 45.27 & \cellcolor{secondgreen}51.24 \\
Ig  & 32.50 & \cellcolor{bestblue}\textbf{35.50} & \cellcolor{secondgreen}34.00 & 33.00 & 28.00 & 8.00  & 12.00 & 28.00 & 29.50 & 33.00 & \cellcolor{bestblue}\textbf{35.50} \\
Id  & 42.23 & 46.12 & 42.23 & \cellcolor{secondgreen}46.60 & 39.56 & 5.83  & 36.17 & \cellcolor{secondgreen}46.60 & 41.26 & 44.90 & \cellcolor{bestblue}\textbf{49.03} \\
Ga  & 43.87 & \cellcolor{secondgreen}47.24 & 44.48 & \cellcolor{bestblue}\textbf{48.16} & 38.04 & 7.06  & 34.97 & 45.09 & 43.25 & 37.12 & \cellcolor{secondgreen}47.24 \\
Ja  & 34.98 & \cellcolor{secondgreen}36.45 & 32.02 & 33.00 & 28.08 & 0.00  & 31.03 & 31.53 & 30.54 & \cellcolor{bestblue}\textbf{37.93} & 32.51 \\
Jv  & 36.36 & 39.06 & 32.66 & 36.36 & 29.97 & 11.78 & 29.29 & \cellcolor{secondgreen}39.39 & 32.32 & 38.72 & \cellcolor{bestblue}\textbf{42.42} \\
Rw  & 34.47 & \cellcolor{secondgreen}35.74 & 32.34 & 34.04 & 29.79 & 20.00 & 32.34 & 33.62 & 28.94 & 28.94 & \cellcolor{bestblue}\textbf{43.40} \\
Ko  & \cellcolor{bestblue}\textbf{51.03} & \cellcolor{bestblue}\textbf{51.03} & \cellcolor{secondgreen}50.69 & \cellcolor{bestblue}\textbf{51.03} & 46.55 & 0.00  & 47.24 & 43.79 & 44.48 & 46.55 & 47.93 \\
Ms  & 47.94 & \cellcolor{secondgreen}51.11 & 47.30 & \cellcolor{bestblue}\textbf{51.75} & 40.32 & 9.52  & 40.95 & 44.76 & 42.86 & 46.35 & 48.57 \\
Mr  & 36.14 & 36.14 & 35.64 & 34.16 & 26.24 & 0.99  & \cellcolor{secondgreen}42.08 & 36.63 & 39.11 & 39.11 & \cellcolor{bestblue}\textbf{45.05} \\
Min & 36.65 & 37.05 & 33.07 & 34.66 & 33.86 & 9.56  & 29.48 & \cellcolor{secondgreen}41.04 & 31.87 & 39.04 & \cellcolor{bestblue}\textbf{42.63} \\
Mn  & 29.49 & 29.17 & 28.53 & \cellcolor{secondgreen}32.37 & 27.24 & 19.55 & 30.13 & 30.13 & 28.53 & 26.92 & \cellcolor{bestblue}\textbf{36.22} \\
No  & 52.51 & \cellcolor{bestblue}\textbf{53.51} & 51.17 & 50.17 & 42.14 & 7.36  & 43.81 & 51.17 & 48.83 & \cellcolor{secondgreen}53.18 & 50.50 \\
Om  & \cellcolor{bestblue}\textbf{37.38} & 35.05 & 34.58 & 32.24 & 29.44 & 18.22 & 17.29 & 33.18 & 32.71 & 31.78 & \cellcolor{secondgreen}35.98 \\
Pt  & \cellcolor{secondgreen}57.39 & \cellcolor{bestblue}\textbf{58.80} & 56.69 & 55.28 & 52.46 & 17.61 & 47.54 & \cellcolor{secondgreen}57.39 & 56.69 & 56.69 & 53.52 \\
Ro  & 50.99 & 53.64 & 51.32 & \cellcolor{bestblue}\textbf{55.96} & 46.69 & 3.31  & 44.70 & 50.99 & 46.03 & \cellcolor{secondgreen}55.63 & 53.64 \\
Ru  & 57.50 & 57.50 & 52.50 & 55.50 & 46.50 & 8.00  & 56.50 & \cellcolor{secondgreen}60.00 & 47.00 & 52.50 & \cellcolor{bestblue}\textbf{61.00} \\
Si  & 24.44 & 25.33 & 27.11 & \cellcolor{secondgreen}28.00 & 26.22 & 2.22  & 9.33  & 11.56 & 27.56 & 24.89 & \cellcolor{bestblue}\textbf{39.11} \\
Es  & \cellcolor{secondgreen}54.57 & \cellcolor{bestblue}\textbf{54.71} & 52.43 & 52.19 & 47.72 & 18.90 & 44.22 & 52.19 & 49.66 & 51.51 & 51.60 \\
Su  & 38.00 & 38.50 & 34.50 & 38.00 & 33.00 & 8.50  & 27.00 & \cellcolor{secondgreen}41.00 & 32.00 & 40.50 & \cellcolor{bestblue}\textbf{44.50} \\
Sw  & 36.26 & 39.56 & 37.36 & 38.10 & 33.33 & 18.68 & 35.90 & \cellcolor{secondgreen}47.25 & 36.26 & 40.66 & \cellcolor{bestblue}\textbf{49.45} \\
Ta  & 31.78 & 30.84 & 27.57 & 30.84 & 28.50 & 0.93  & \cellcolor{secondgreen}34.11 & 28.50 & 27.57 & 32.71 & \cellcolor{bestblue}\textbf{40.65} \\
Te  & \cellcolor{secondgreen}34.50 & 32.50 & 30.00 & 25.50 & 25.00 & 0.00  & 26.50 & 16.00 & 28.50 & 26.50 & \cellcolor{bestblue}\textbf{36.50} \\
Ur  & 25.92 & 30.05 & 25.00 & 28.67 & 26.38 & 2.52  & 42.20 & 39.22 & \cellcolor{secondgreen}42.43 & 39.45 & \cellcolor{bestblue}\textbf{46.33} \\
\midrule
$\mathrm{Avg}_{\mathrm{mul}}$ & 39.32 & \cellcolor{secondgreen}40.47 & 38.42 & 39.33 & 34.24 & 7.80 & 34.59 & 39.38 & 37.00 & 39.59 & \cellcolor{bestblue}\textbf{43.78} \\
\bottomrule
\end{tabular}
}
\end{table*}

\begin{table}[t]
\centering
\caption{Results on three general multimodal benchmarks: MMStar, MMMU, and SEED-Bench-2-Plus. This table reports the exact scores corresponding to the comparison summarized in Figure~\ref{fig:general_mm_results}.}
\label{tab:mmstar_mmmu_seedbench}
\small
\renewcommand{\arraystretch}{0.95}
\setlength{\tabcolsep}{5pt}
\begin{tabular}{lccc}
\toprule
\textbf{Models} & \textbf{MMStar} & \textbf{MMMU} & \textbf{SEED-Bench-2-Plus} \\
\specialrule{0.08em}{0.15em}{0.1em}
LLaVA-v1.5-7B   & 33.83 & 36.33 & 40.97 \\
Task Arithmetic & 33.53 & \cellcolor{secondgreen}36.89 & \cellcolor{secondgreen}42.07 \\
DARE            & \cellcolor{bestblue}\textbf{36.11} & 35.78 & 41.90 \\
TIES-Merging    & 34.27 & 36.33 & \cellcolor{bestblue}\textbf{43.13} \\
Breadcrumbs     & 32.01 & 33.33 & 40.32 \\
STF             & 31.51 & 30.67 & 34.43 \\
NeuroMerging    & 34.58 & \cellcolor{secondgreen}36.89 & 40.84 \\
Palo-7B         & 32.95 & 33.11 & 38.08 \\
DiM\textsuperscript{3}-7B            & \cellcolor{secondgreen}34.86 & \cellcolor{bestblue}\textbf{37.22} & 40.45 \\
\bottomrule
\end{tabular}
\end{table}

\begin{figure}[!h]
  \centering
  \includegraphics[width=\linewidth]{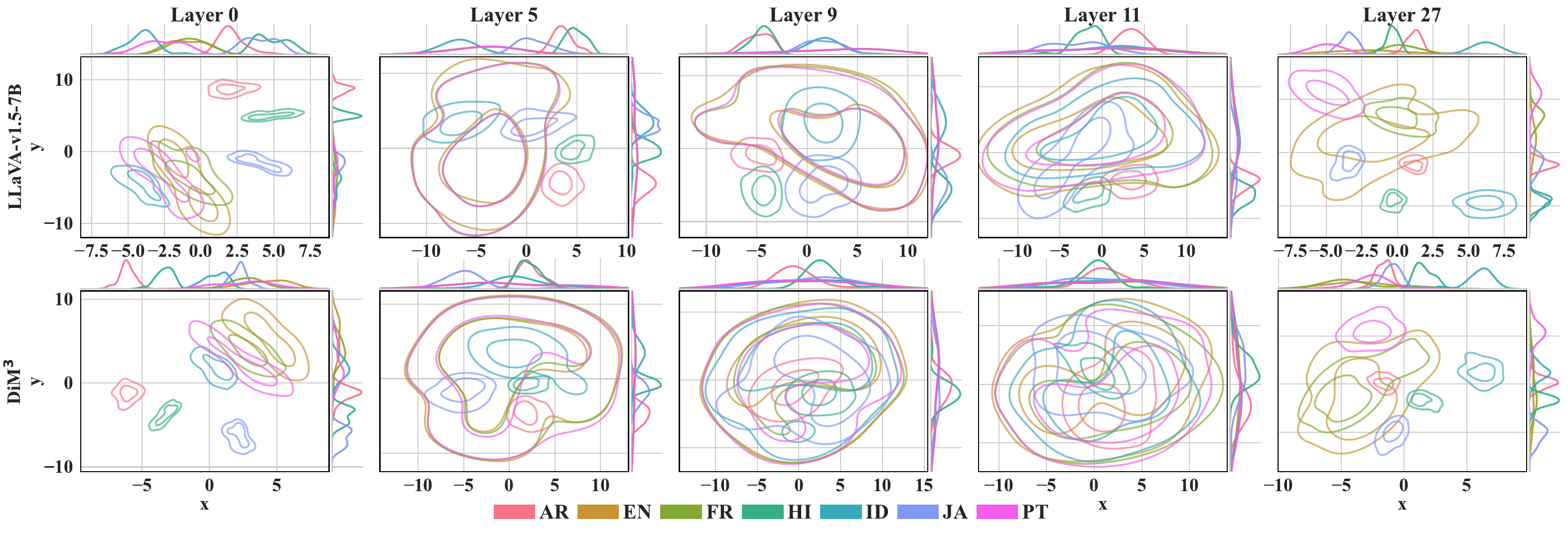}
  \caption{t-SNE visualizations of average-pooled hidden states for the question spans in multilingual multimodal inputs from xMMMU across selected layers. We focus on question tokens to avoid confounding effects from English instruction templates and image tokens.}
  \label{fig:xmmmu_tsne}
\end{figure}

\begin{figure}[!h]
  \centering
  \includegraphics[width=\linewidth]{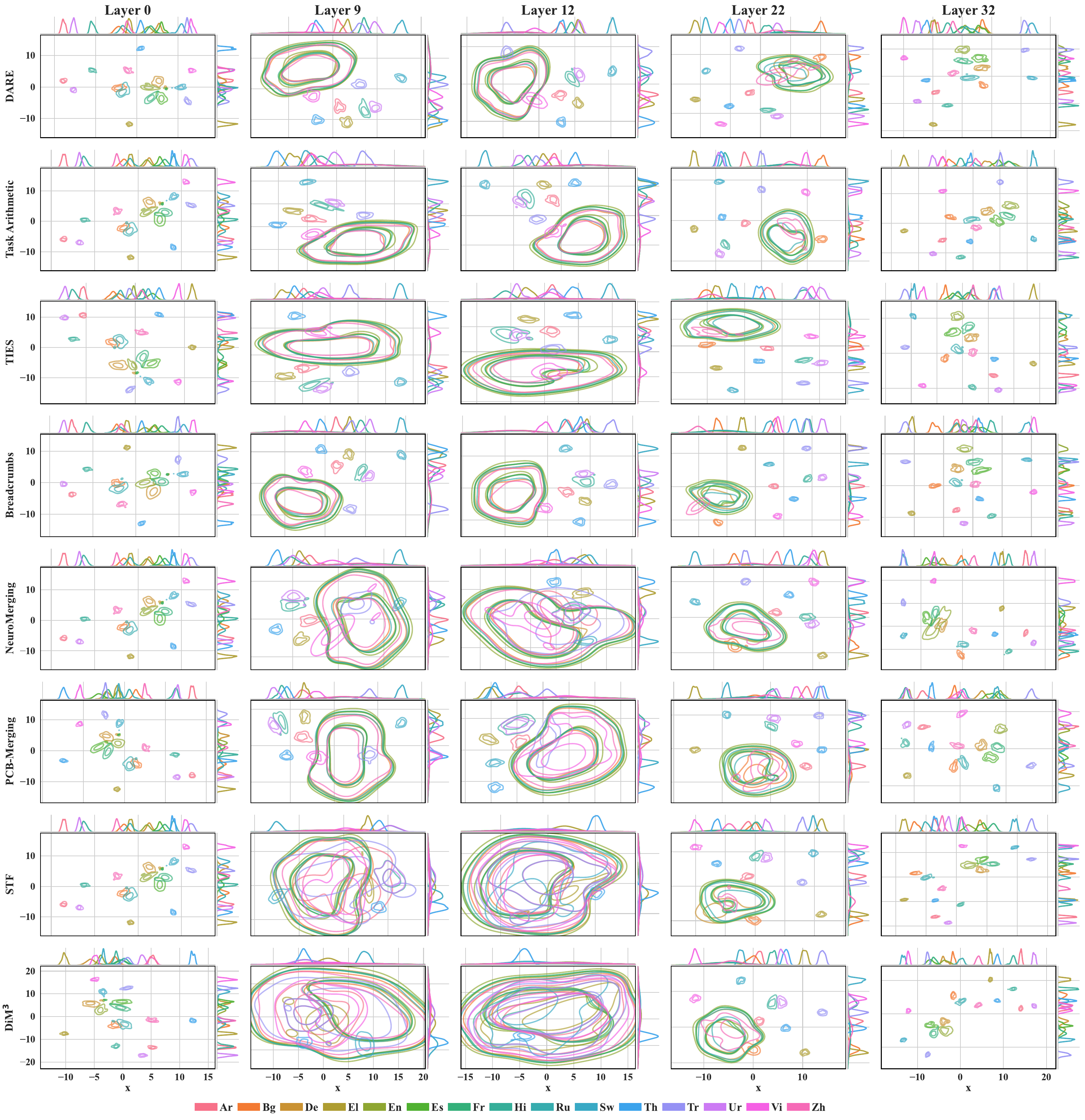}
  \caption{Additional t-SNE visualizations of average-pooled hidden states on multilingual text inputs from XNLI across selected layers. The figure compares a broader set of merging baselines to illustrate differences in cross-lingual hidden-state geometry. }
  \label{fig:text_only_tsne_merging_methods}
\end{figure}

\begin{figure}[!h]
  \centering
  \includegraphics[width=0.9\linewidth]{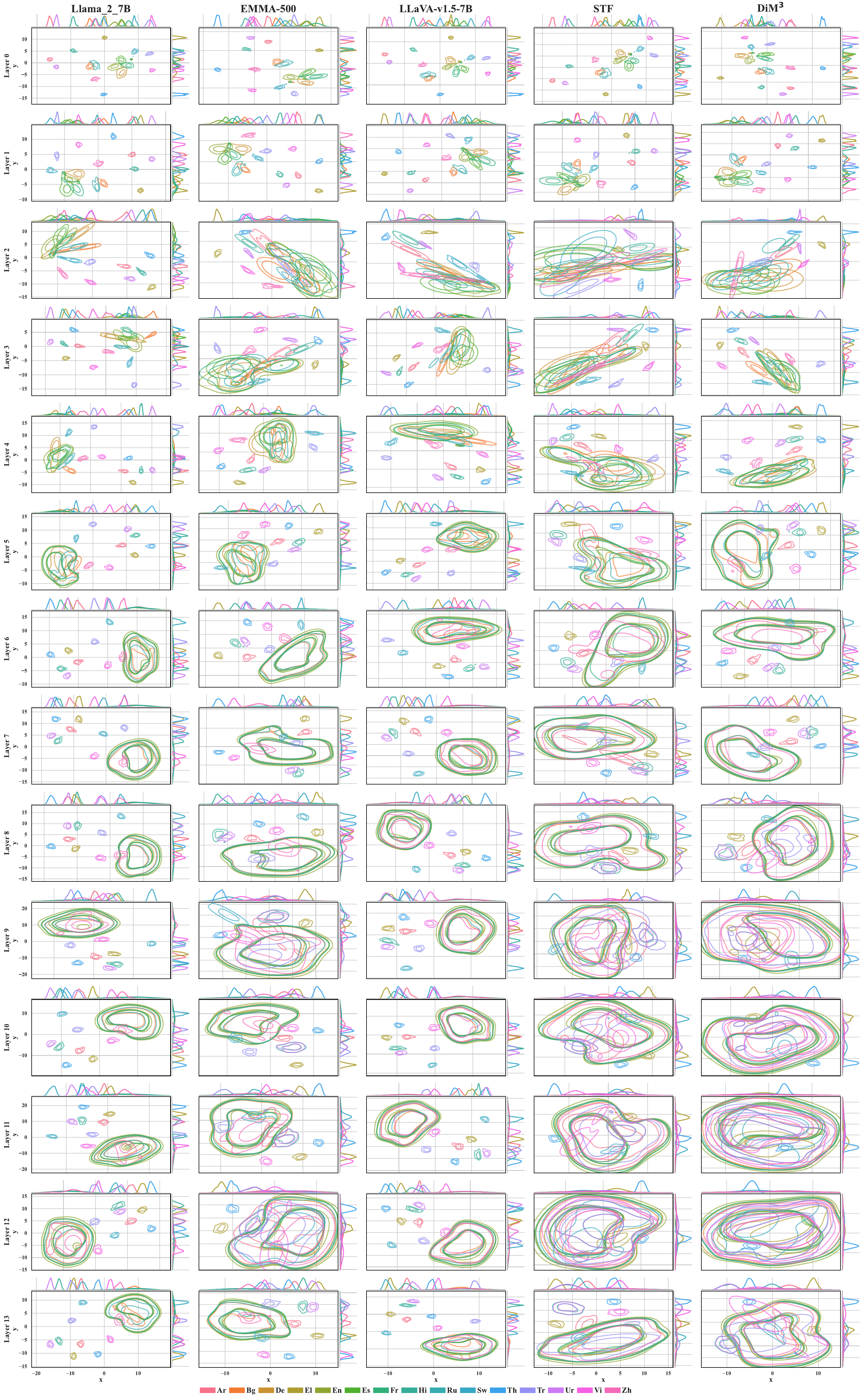}
  \caption{Full layer-wise t-SNE visualizations of average-pooled hidden states on multilingual text inputs from XNLI for layers 0--13.}
  \label{fig:text_only_tsne_0-13}
\end{figure}

\begin{figure}[!h]
  \centering
  \includegraphics[width=0.9\linewidth]{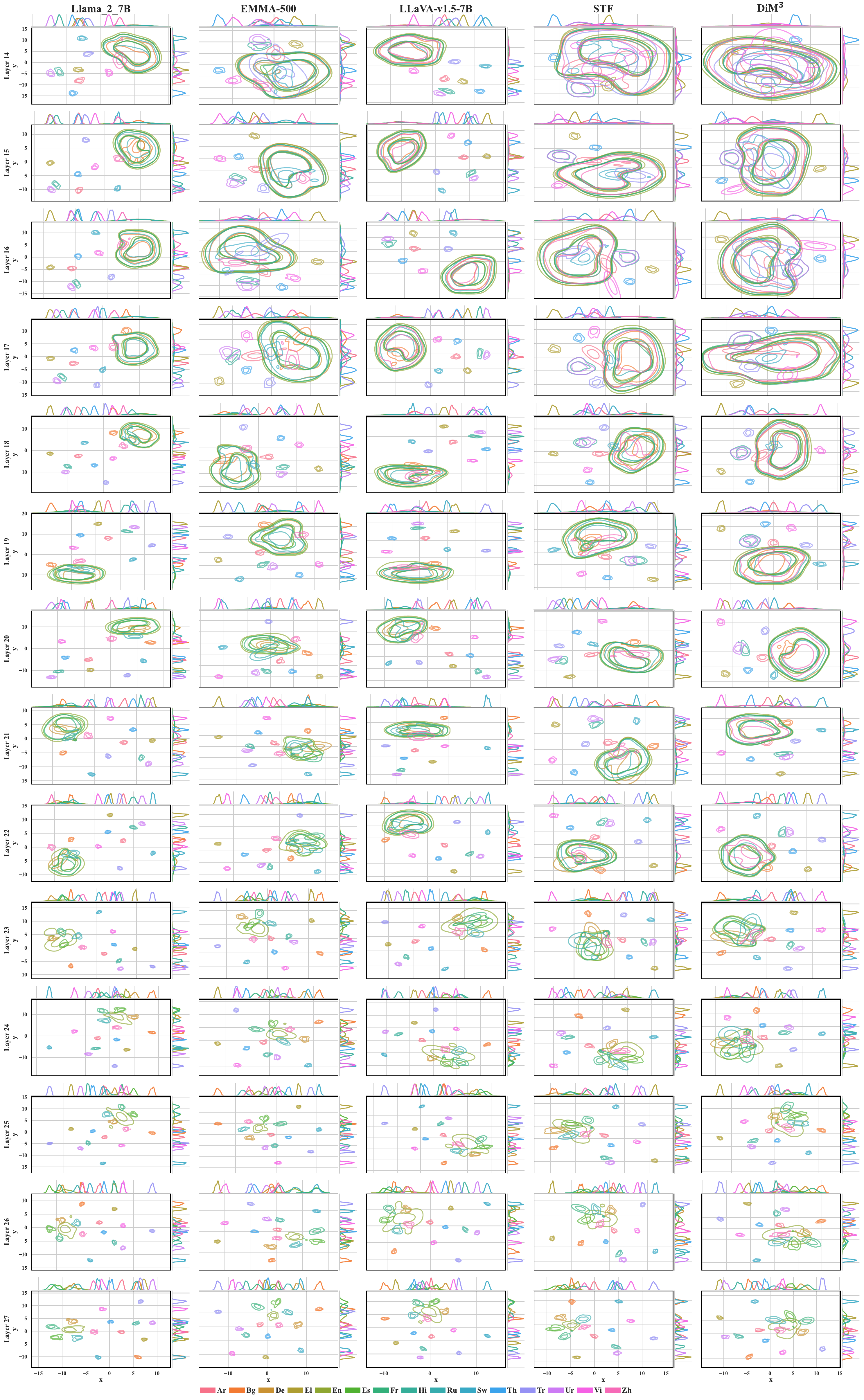}
  \caption{Full layer-wise t-SNE visualizations of average-pooled hidden states on multilingual text inputs from XNLI for layers 14--27.}
  \label{fig:text_only_tsne_14-27}
\end{figure}

\begin{figure}[!h]
  \centering
  \includegraphics[width=\linewidth]{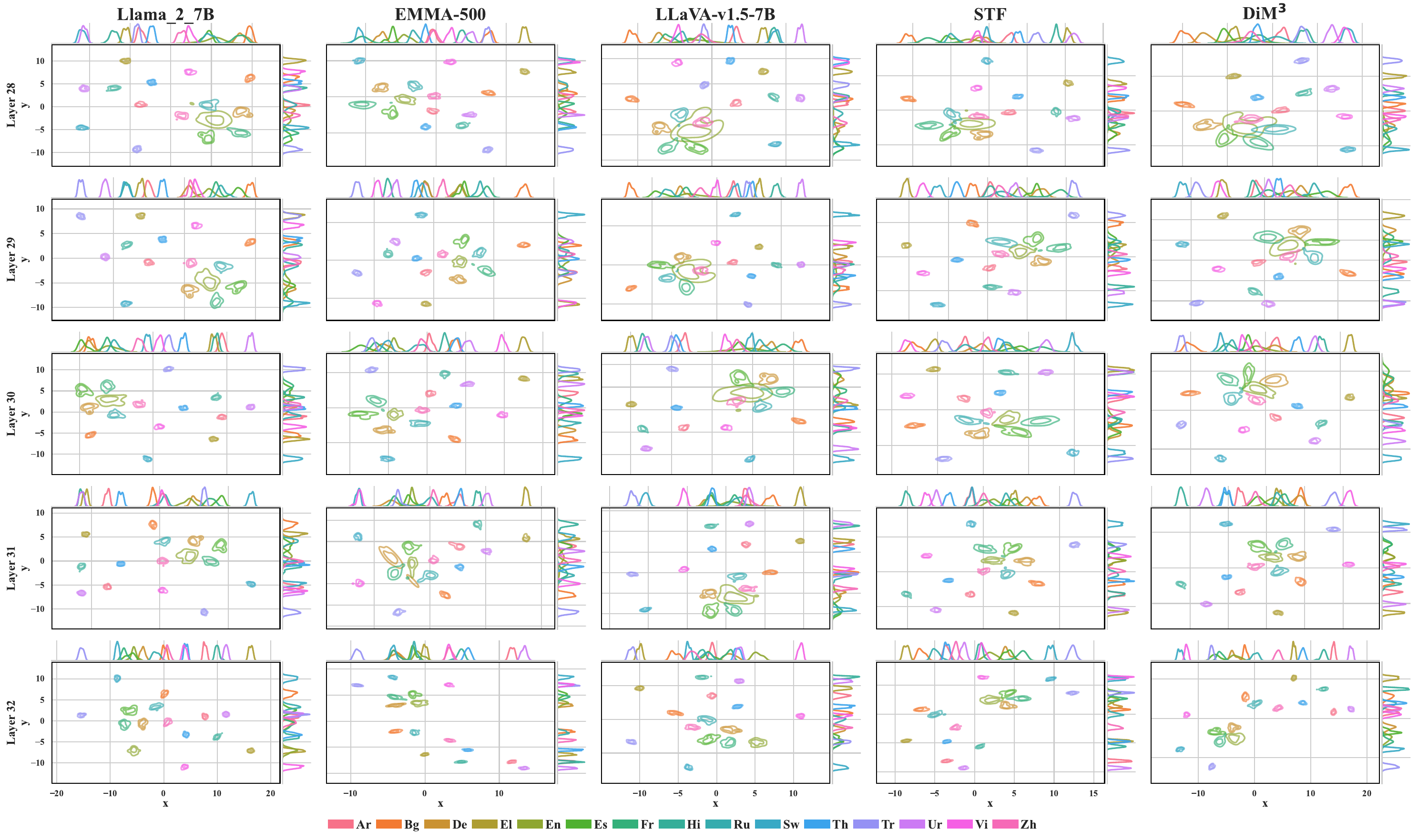}
  \caption{Full layer-wise t-SNE visualizations of average-pooled hidden states on multilingual text inputs from XNLI for layers 28--32.}
  \label{fig:text_only_tsne_28-32}
\end{figure}

\clearpage
\newpage

\end{document}